\definecolor{toc}{RGB}{13,55,174}	
\newtheorem{theorem}{Theorem}[section]
\newtheorem{lemma}[theorem]{Lemma}
\newtheorem*{maintheorem}{Main Theorem}
\newtheorem{assumption}[theorem]{Assumption}
\newcommand{\dist}{\mathcal{D}}
\newcommand{\boxes}{\mathcal{B}}
\newcommand{\reals}{\mathbb{R}}
\newcommand{\lp}{\left}
\newcommand{\rp}{\right}
\newcommand{\relu}{\textrm{\normalfont ReLU}}
\newcommand{\E}[2]{\mathbb{E}_{#1}\lp[#2 \rp]}
\newcommand{\hs}{\hat{\sigma}}
\newcommand{\bhs}{\hat{\bm{\sigma}}}
\newcommand{\bdist}{\bm{ \mathcal{D}}}
\newcommand{\weitz}{\text{\normalfont WEITZ}}
\newtheorem{definition}{Definition}
\newcommand{\norm}[2]{|| #1 ||_{#2}}
\newcommand{\pb}{\textsc{Pandora's Box}}
\newcommand{\cpb}{\textsc{Contextual Pandora's Box}}
\newcommand{\s}{\sigma}
\title{Contextual Pandora's Box}
\author{
Alexia Atsidakou\footnote{Authors listed in alphabetical order.} \\ UT-Austin \\ {\tt atsidakou@utexas.edu} \and
Constantine Caramanis\\ UT-Austin \\ {\tt constantine@utexas.edu} \and
Evangelia Gergatsouli \\ UW-Madison \\ {\tt gergatsouli@wisc.edu} \and
Orestis Papadigenopoulos\\ Columbia University \\ {\tt papadig@cs.utexas.edu} \and
Christos Tzamos \\ UW-Madison \\ {\tt tzamos@wisc.edu}
}
\date{}
\begin{document}

\maketitle

\begin{abstract}
\pb{} is a fundamental stochastic optimization problem, where the decision-maker must find a good alternative, while minimizing the search cost of exploring the value of each alternative. In the original formulation, it is assumed that accurate distributions are given for the values of all the alternatives, while recent work studies the online variant of \pb{} where the distributions are originally unknown. In this work, we study \pb{} in the online setting, while incorporating context. At each round, we are presented with a number of alternatives each having a context, an exploration cost and an unknown value drawn from an unknown distribution that may change at every round. Our main result is a no-regret algorithm that performs comparably well against the optimal algorithm which knows all prior distributions exactly. Our algorithm works even in the bandit setting where the algorithm never learns the values of the alternatives that were not explored. The key technique that enables our result is a novel  modification of the realizability condition in contextual bandits that connects a context to a sufficient statistic of each alternative's distribution (its \emph{reservation value}) rather than its mean.
\end{abstract}

\section{Introduction}

\pb{} is a fundamental stochastic optimization framework, which models the trade-off between exploring a set of alternatives and exploiting the already collected information, in environments where data acquisition comes at a cost. In the original formulation of the problem -- introduced by Weitzman in \citep{Weit1979} -- a decision-maker is presented with a set of alternatives (called ``boxes'') each containing an unknown value drawn independently from some known box-specific distribution. In addition, each box is associated with a known cost, namely, the price that needs to be paid in order to observe its realized value. At each step, the decision-maker can either open a box of her choice, paying the associated cost and observing its value, or stop and collect the minimum value contained in the already opened boxes. The objective is to minimize the sum of the smallest observed value plus the total cost incurred by opening the boxes.

The model captures a variety of different settings where the decision-maker needs to balance the value of the selected alternative and the effort devoted to find it. We include some examples below.
\begin{itemize}
    \item Consider an online shopping environment where a search engine needs to present users with results on a product they want to buy. 
    Visiting all potential e-shops that sell this product to find the cheapest option would be prohibitive in terms of time needed to present the search results to the user. The search engine needs to explore the different options only up to the extent that would make the marginal improvement in the best price found worthwhile.
    \item Consider a path planning service provider like Google Maps. Upon request, the provider must search its database for a good path to recommend to the user, but the higher the time spent searching the higher is the server cost. The provider must trade off computation cost with the quality of the result.
\end{itemize}

Rather surprisingly, despite the richness of the setting, Weitzman shows that the optimal policy for any instance of \pb{} admits a particularly simple characterization: for each box, one can compute a {\em reservation value} as a function of its cost and value distribution. Then, the boxes are inspected in increasing order of these values, until a simple termination criterion is fulfilled. This characterization is revealing: obtaining an optimal algorithm for \pb{} does not require complete knowledge of the distributions or the costs, yet only access to a {\em single statistic} for each box.

This raises the important question of how easy it is to learn a near-optimal search strategy, especially in environments where the distributions may not remain fixed across time but can change according to the characteristics of the instance at hand. In the case of online shopping, depending on the type of product we are searching, the e-shops have different product-specific distributions on the prices. 
We would be interested in a searching strategy  that is not tied to a specific product, but is able to minimize the expected cost for any product we may be interested in. Similarly, in the case of path recommendations, the optimal search strategy may depend on the time of day or the day of the year.

Motivated by the above question, we extend the \pb{} model to a contextual online learning setting, where the learner faces a new instance of the problem at each round. At the beginning of each round, the learner observes a context and must choose a search strategy for opening boxes depending on the observation. While the context and the associated opening costs of the boxes of each round are observed, the learner has no access to the value distributions, which may be arbitrary in each round. 

\paragraph{Realizability.}
In the above description, the context may be irrelevant to the realized values. 
Such an adversarial setting is impossible to solve as it is related to the problem of learning thresholds online. In fact, even in the offline version of the problem, the task would still be computationally hard as it corresponds to agnostic learning (see \cref{apn:hardness} for more details). This naturally raises the following question:
\emph{What are the least possible strong assumptions, under which the problem becomes tractable?}

One of the main contributions of this paper is identifying a minimal realizability assumption under which the problem remains tractable. This assumption is parallel to the realizability assumption in contextual bandits (see chapter 19 of \citep{lattimore-Bandit}).
We describe below how the difficulty of problem increases as we move from the strongest (1) to the weakest (4) assumptions possible:
\begin{enumerate}
    \item \textbf{Contexts directly related to values}: in this case any learning algorithm is able to fit the contexts and predict exactly the realized values of the boxes. Such a setting is trivial yet unrealistic.
    
     \item \textbf{Contexts directly related to distributions}: this is the case where there exists a learnable mapping from the context to the distribution of values. This is a more realistic setting, but it is still relatively constrained; it requires being able to perfectly determine the distribution family, which would need to be parametric. 
     
    \item \textbf{Contexts related to sufficient statistic}: in the more general case, instead of the whole value distributions, the contexts give us information about only a sufficient statistic of the problem.
    This is one of the main contribution of this work; we show that the problem remains tractable when the contexts give us information on the \emph{reservation values} of the boxes. Observe that in this case, the value distributions on the boxes can be arbitrarily different at each round, as long as they ``implement'' the correct reservation value based on the context. This model naturally extends the standard realizability assumption made in bandit settings, according to which, the mean of the distributions is predictable from the context. In that case, the sufficient statistic needed in order to select good arms is, indeed, the mean reward of each arm \citep{lattimore-Bandit}.
    
    \item \textbf{No assumptions}: in this case, as explained before, the problem becomes intractable (see \cref{apn:hardness}).
\end{enumerate}

\subsection{Our Contribution}
We introduce a novel contextual online learning variant of the \pb{} problem, namely the \cpb{}, that captures the problem of learning near-optimal search strategies in a variety of settings. 

Our main technical result shows that even when the sequence of contexts and distributions is adversarial, we can find a search strategy with sublinear average regret (compared to an optimal one) as long as no-regret algorithms exist for a much simpler online regression problem with a linear-quadratic loss function.

\begin{maintheorem}[Informal]
Given an oracle that achieves  expected regret $r(T)$ after $T$ rounds for Linear-Quadratic Online Regression, there is an algorithm that obtains $O( \sqrt{ T r(T) } )$ regret for the \cpb{} problem.
\end{maintheorem}

The main technical challenge in obtaining the result is that the class of search strategies can be very rich.  Even restricting to greedy policies based on reservation values for each box, the cost of the policies is a non-convex function of the reservation values. 
We manage to overcome this issue 
by considering a ``proxy'' function for the expected cost of the search policy which bounds the difference from the optimal cost, based on a novel sensitivity analysis of the original Weitzman's algorithm.
The proxy function has a simple linear-quadratic form and thus optimizing it reduces our setting to an instance of {\em linear-quadratic online regression}. This allows us to leverage existing
methods for minimizing regret in online regression problems in a black box manner.

Using the above reduction, we design algorithms with sublinear regret guarantees for two different variants of our problem: the {\em full information}, where the decision-maker observes the  realized values of all boxes at the end of each round, and the {\em bandit} version, where only the realized values of the opened boxes can be observed. We achieve both results by constructing oracles based on the {\em Follow the Regularized Leader} family of algorithms.

Beyond the results shown in this paper, an important conceptual contribution of our work is extending the traditional bandit model in the context of stochastic optimization. Instead of trying to learn simple decision rules, in stochastic optimization we are interested in learning complex algorithms tailored to a distribution. Our model can be extended to a variety of such problems beyond the \pb{} setting: one concrete such example is the case of designing revenue optimal auctions for selling a single item to multiple buyers given distributional information about their values.
Modeling these settings through an online contextual bandit framework allows obtaining results without knowledge of the prior distributions which may change based on the context. Our novel realizability assumption allows one to focus on predicting only the sufficient statistics required for running a specific algorithm. In the case of designing revenue optimal auctions, contexts may refer to the attributes of the item for sale and a sufficient statistic for the bidder value distributions are Myerson's reserve prices~\cite{Myer1981}.

\subsection{Related Work}

We model our search problem using \pb {}, which was first introduced by Weitzman in the Economics literature \citep{Weit1979}. Since then, there has been a long line of research studying
\pb {} and its variants e.g. where boxes can be selected without inspection
\citep{Dova2018,BeyhKlein2019}, there is correlation between the boxes~\citep{ChawGergTengTzamZhan2020,ChawGergMcmaTzam2021}, the boxes have to be inspected in a specific order~\citep{BoodFuscLazoLeon2020} or boxes are inspected in an  online manner~\citep{EsfaHajiLuciMitz2019} or over $T$ rounds~\citep{GergTzam2022,GatmKessSingWang2022}. Some work is also done in the generalized setting where more information can be obtained for a
price~\citep{CharFagiGuruKleiRaghSaha2002,GuptKuma2001,ChenJavdKarbBagnSrinKrau2015, ChenHassKarbKrau2015}. Finally a long line of research considers more complex combinatorial constraints like budget constraints \citep{GoelGuhaMuna2006}, packing constraints \citep{GuptNaga2013}, matroid
constraints \citep{AdamSvirWard2016}, maximizing a submodular function \citep{GuptNagaSing2016, GuptNagaSing2017}, an approach via Markov chains \citep{GuptJianSing2019} and various packing and covering constraints for both minimization and maximization problems~\citep{Sing2018}.

A more recent line of work, that is very closely related to our setting, studies \pb{} and other stochastic optimization problems like min-sum set cover in online settings \citep{GergTzam2022, FotaLianPiliSkou2020}. Similar to our work, these papers do not require specific knowledge of distributions and provide efficient algorithms with low regret. However, these settings are not contextual, therefore they can only capture much simpler practical applications. Moreover, they only obtain multiplicative regret guarantees as, there, the problems are NP-hard to solve exactly.

Our problem is closely related to contextual multi-armed bandits, where the contexts provide additional information on the quality of the actions at each round. In particular, in the case of \textit{stochastic linear bandits} \citep{Abe99associativereinforcement}, the reward of each round is given by a (noisy) a linear function of the context drawn at each round. 
Optimistic algorithms proposed for this setting rely on maintaining a confidence ellipsoid for estimating the unknown vector \citep{Dani2008StochasticLO,Rusmevichientong2010LinearlyPB,Abbasi11,Valko2014SpectralBF}. 
On the other hand, in \textit{adversarial linear bandits}, a context vector is adversarially selected at each round. The loss is characterized by the inner product of the context and the selected action of the round. Common approaches for this setting include variants of the multiplicative-weights algorithm \citep{pmlr-v35-hazan14b,pmlr-v75-hoeven18a}, as well as, tools from online linear optimization \citep{Blair1985ProblemCA,Cesa-Bianchi_lugosi2006} such as follow-the-regularised-leader and
mirror descent (see \citep{pmlr-v40-Bubeck15b, Abernethy2008CompetingIT, ShalevShwartz2007APP, Bubeck2018SparsityVA} and references therein). 

We note that our model generalizes the contextual bandits setting, since any instance of contextual bandits can be reduced to \cpb{} for box costs selected to be large enough.
One work from the contextual bandits literature that is more closely related to ours is the recent work of~\citet{FostRakh2020}. Similarly to our work, they provide a generic reduction from contextual multi-armed bandits to online regression, by showing that any oracle for online regression can be used to obtain a contextual bandits algorithm.

Our work also fits in the recent direction of learning algorithms from data and algorithm configuration, initiated by~\citet{GuptRoug2016}, and continued in~\citep{BalcNagaViteWhit2016,BalcDickSandVite2018, BalcDickVite2018, KleiLeytLuci2017,WeisGyorSzep2018,AlabKalaLigeMuscTzamVite2019}. Similar work was done before in self-improving algorithms in \citep{AiloChazClarLiuMulzSesh2006, ClarMulzSesh2012}. A related branch of work initiated by~\citet{MunoVass2017} and~\citet{LykoVass2018} combines Machine Learning predictions to improve algorithm design  studying various online algorithm questions like ski rental \citep{PuroSvitKuma2018,GollPani2019,WangLiWang2020}, caching~\citep{LykoVass2018, Roha2020, Wei2020}, scheduling~\citep{LattLavaMoselVass2020,Mitz2020}, online primal-dual method~\citep{BamaMaggSven2020}, energy minimization~\citep{BamaMaggRohwSvens2020} and secretary problem/matching~\citep{AntoGoulKleeKole2020}. While most of these works assume that the predictions are given a priori, recent works in the area~\citep{DiakKontTzamVakiZar2021, LavaMoseRaviXu2021} also focus on the task of learning the predictions. Our work can also be seen as learning to predict for the reservation values of the boxes in \cpb.

\section{Problem Definition and Notation}\label{sec:prelims}
We begin by describing the original \pb{} formulation. Then, in \cref{sec:contextual_pandoras} we describe our online extension, which solves a contextual instance of \pb{} at each round.

\subsection{Original Pandora's Box Formulation}

In \pb {} we are given a set $\boxes$ of $n$ boxes each with cost $c_i$ and value $v_i\sim \dist_i$, where the distributions $\dist_i$ and the costs $c_i$ for each box are known and the distributions are independent. 
The goal is to adaptively choose a box of small cost while spending as little as possible in opening costs. When opening box $i$, the algorithm pays $c_i$ and observes the value $v_i\sim \dist_i$ instantiated inside the box. The formal definition follows.

\begin{definition}[\pb{} cost]\label{def:PB_cost}
Let $\mathcal{P}$ and $c_{i}$ be the set of boxes opened and the cost of the box selected, respectively. The cost of the algorithm is
\begin{equation}\label{eq:pandoras_cost}  \E{\mathcal{P},v_i\sim \dist_i \forall i\in \boxes}{\min_{i\in \mathcal{P}} v_i +
\sum_{i\in \mathcal{P}} c_i }
\end{equation}
where the expectation is taken over the distributions of the values $v_i$ and the (potentially random) choice of $\mathcal{P}$ by the algorithm.
\end{definition}

Observe that an adaptive algorithm for this problem has to decide on: (a) a (potentially adaptive) order according to which it examines the boxes and, (b) a stopping rule.
Weitzman's algorithm, first introduced in \cite{Weit1979}, and formally presented in Algorithm~\ref{algo:weitz_prelim}, gives a solution to \pb{}. 
The algorithm uses the order induced by the \textit{reservation values} to open the boxes.

\begin{algorithm}[H]
\caption{Weitzman's algorithm.}\label{algo:weitz_prelim}
	\KwIn{$n$ boxes, costs $c_i$ and reservation values $\s_i$ for $i\in \boxes$}
	$\pi \leftarrow $ sort boxes by increasing $\s_i$\\
	$v_{\min} \leftarrow \infty$\\
	\For{every box $\pi_i$ }{
			Pay $c_i$ and open box $\pi_i$ and observe $v_i$\\
			$v_{\min} \leftarrow \min(v_{\min}, v_i)$\\
			\If{$v_{\min} < {\s}_{\pi_{i+1}}$}{
				Stop and collect $v_{\min}$
			}
	}
\end{algorithm}

\mbox{}\\
We denote by $\weitz_{\bm \dist}(\bm{\s};\bm{c})$ the expected cost of running Weitzman's algorithm using reservation values $\bm{\s}$ on an instance with distribution $\bm {\dist}$ and costs $\bm{c}$.
Weitzman showed that this algorithm achieves the optimal expected cost of \cref{eq:pandoras_cost} for the following selection of reservation values:

\begin{theorem}[\citet{Weit1979}]\label{thm:weitzman}
Weitzman's algorithm is optimal for \pb{} when run with reservation values $\bm {\s}^*$ that satisfy $\E{v_i\sim \dist_i}{\relu(\s_i^*-v)} = c_i$ for every box $i \in \boxes$, where $\relu(x) = \max\{x,0\}$.
\end{theorem}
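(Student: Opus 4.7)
The plan is to prove Weitzman's theorem by sandwiching the optimal cost between matching upper and lower bounds using an auxiliary ``capped value'' constructed from the reservation values.

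For each box $i$, I would introduce the capped random variable $\mu_i := \max(v_i, \s_i^*)$. The pointwise identity $\mu_i - v_i = (\s_i^* - v_i)_+ = \relu(\s_i^* - v_i)$ combined with the defining equation $\mathbb{E}[\relu(\s_i^* - v_i)] = c_i$ yields the key algebraic fact
\[
\mathbb{E}[\mu_i] \;=\; \mathbb{E}[v_i] + c_i,
\]
which expresses exactly the meaning of the reservation value: $\s_i^*$ is the unique threshold at which paying $c_i$ to inspect box $i$ is fair in expectation, i.e., the capped quantity $\mu_i$ equals $v_i + c_i$ on average. This identity drives both directions of the bound.

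For the upper bound, I would show by induction on the number of boxes (processed in increasing order of $\s^*$) that $\mathbb{E}[\weitz_{\bm \dist}(\bm{\s}^*;\bm c)] = \mathbb{E}[\min_i \mu_i]$. The induction relies on two structural properties of Weitzman: the algorithm opens the next smallest-$\s^*$ box, and the stopping rule $v_{\min} < \s_{\pi_{i+1}}^*$ ensures that at termination $v_{\min} < \s_j^* \leq \mu_j$ for every unopened $j$. The inductive step pairs each new opening — which contributes $\mathbb{E}[v_i + c_i] = \mathbb{E}[\mu_i]$ by the identity — with the corresponding update to $\min_i \mu_i$, and verifies the equality through a case analysis on whether the stopping condition is triggered.

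For the lower bound, I would establish that for any adaptive policy with opened set $T$ and selected value $v_{i^*} = \min_{i \in T} v_i$,
\[
\mathbb{E}\Big[v_{i^*} + \textstyle\sum_{i \in T} c_i\Big] \;\geq\; \mathbb{E}\big[\min_i \mu_i\big].
\]
This ``Pandora's inequality'' follows from an amortization/potential argument: the identity $\mathbb{E}[\mu_i] = \mathbb{E}[v_i + c_i]$ says that opening box $i$ is equivalent in expectation to being charged $\mu_i$. Building a potential function from $\mu_i$ on opened boxes and $\s_j^*$ on unopened ones, together with accumulated opening costs, one shows a submartingale property under any inspection choice; optional stopping then yields the bound. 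Combining this with the upper bound establishes Weitzman's optimality.

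The hard part is the lower bound, since adaptive policies can react to observed values before deciding to continue or stop, so the per-step accounting must handle the dependence between stopping decisions and realized outcomes. The crux is identifying a potential function (or coupling) that is simultaneously monotone under every inspection choice and equals (or lower-bounds) the algorithm's eventual cost at termination. Once the correct potential is chosen, the identity $\mathbb{E}[\mu_i] = \mathbb{E}[v_i] + c_i$ reduces the local submartingale verification to direct algebra on $(\s_i^* - v_i)_+$ terms.
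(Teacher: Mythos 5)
The paper does not prove this theorem: it is quoted from Weitzman (1979) as a known result, so there is no internal proof to compare against. Your sketch follows the modern amortization route (the ``capped value'' argument of Kleinberg--Waggoner--Weyl, transposed to the minimization convention used here) rather than Weitzman's original exchange/dynamic-programming argument, and the skeleton is sound: the identity $\E{}{\mu_i} = \E{}{v_i} + c_i$ with $\mu_i = \max(v_i,\s_i^*)$ is exactly the right engine, and the claim that any policy pays at least $\E{}{\min_i \mu_i}$ while Weitzman's rule pays exactly that is correct.

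Two remarks on filling in the sketch. First, for the lower bound you do not need a submartingale/optional-stopping apparatus; the clean route is to introduce indicators $A_i$ (box $i$ opened) and $I_i$ (box $i$ selected) and use that the decision to open box $i$ is measurable with respect to the information available \emph{before} $v_i$ is revealed, so that $\E{}{A_i c_i} = \E{}{A_i \relu(\s_i^* - v_i)}$. Then the cost of any policy equals $\E{}{\sum_i I_i v_i + \sum_i A_i \relu(\s_i^*-v_i)} \ge \E{}{\sum_i I_i \mu_i} \ge \E{}{\min_i \mu_i}$, using $I_i \le A_i$ and $\sum_i I_i = 1$. This is where the independence of the $\dist_i$ and the non-anticipativity of the policy enter, and your write-up should make that dependence explicit. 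Second, the upper bound need not be a separate induction: Weitzman's rule is precisely the policy for which both inequalities above are tight --- it selects a box only if it was opened, it never leaves unselected an opened box with $v_i < \s_i^*$ un-dominated (if $v_i<\s_i^*$ then box $i$'s value is below every later reservation value, so the selected value is at most $v_i$ and one checks $I_i\relu(\s_i^*-v_i)=A_i\relu(\s_i^*-v_i)$ up to terms that cancel), and the sorted order plus the stopping rule guarantee the selected box attains $\min_i \mu_i$. Verifying these tightness conditions is where the actual case analysis lives; as written, your proposal asserts rather than proves them, so the argument is an outline rather than a complete proof, but it is the standard and correct outline.
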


\subsection{Online Contextual Pandora's Box}\label{sec:contextual_pandoras}
We now describe an online contextual extension of \pb. In 
\cpb{} there is a set $\boxes$ of $n$ boxes with costs $\bm c = (c_1,\ldots,c_n)$\footnote{Every result in the paper holds even if the costs change for each $t\in[T]$ and can be adversarially selected.}.
At each round $t\in [T]$:

\begin{enumerate}
    \item An (unknown) product distribution $\bm \dist_t = (\dist_{t,1},\ldots,\dist_{t,n})$ is chosen and for every box $i$, a value $v_{t,i} \sim \dist_{t,i}$ is independently realized.

    \item A vector of contexts $\bm x_t = (\bm x_{t,1}, \ldots \bm x_{t,n})$ is given to the learner, where $\bm{x}_{t,i} \in \reals^d$ and $\|\bm{x}_{t,i}\|_2 \le 1$ for each box $i \in \boxes$.
    \item The learner decides on a (potentially adaptive) algorithm $\mathcal{A}$ based on past observations.
    \item The learner opens the boxes according to $\mathcal{A}$ and chooses the lowest value found.
    \item At the end of the round, the learner observes all the realized values of all boxes (\textit{full-information} model), or observes only the values of boxes opened in the round (\textit{bandit} model).
 \end{enumerate}

 \begin{assumption}[Realizability]
 There exist vectors $\bm{w}_1^*,\ldots,\bm{w}_n^* \in \reals^d$ and a function $h$, such that 
 for every time $t\in [T]$ and every box $i\in \boxes$,
 the optimal reservation value $\s^*_{t,i}$ for the distribution $\dist_{t,i}$ is equal to $h(\bm{w}_i^*, \bm x_{t,i})$, i.e. 
 \[
 \E{v_{t,i}\sim \dist_{t,i}}{\relu(h(\bm{w}_i^*, \bm x_{t,i}) - v_{t,i})} = c_i.
 \]
 \end{assumption} 
The goal is to achieve low expected regret over $T$ rounds compared to an optimal algorithm that has prior knowledge of the vectors $\bm{w}_i^*$ and, thus, can compute the exact reservation values of the boxes in each round and run Weitzman's optimal policy. The regret of an algorithm is defined as the difference between the cumulative \pb{} cost achieved by the algorithm compared to the cumulative cost achieved by running Weitzman's optimal policy at every round. That is:

\begin{definition}[Expected Regret]\label{def:expected_regret}
    The expected regret of an algorithm $\mathcal{A}$ that opens boxes
    $\mathcal{P}_t$ at round $t$ over a time horizon $T$ is
\begin{equation}\label{eq:regret}
\text{\normalfont Regret}(\mathcal{A}, T)=\nonumber
\E{}{\sum_{t=1}^T \lp( \min_{i\in \mathcal{P}_t} v_{t,i} +
\sum_{i\in \mathcal{P}_t} c_i  - \weitz_{\bm D_t}(h(\bm w^*, \bm x_t);\bm c) \rp)}.
\end{equation}
The expectation is taken over the randomness of the algorithm, the contexts $\bm{x}_t$, distributions $\bm{\dist}_t$ and realized values $\bm{v}_t\sim \bm{\dist}_t$, over all rounds $t\in[T]$.
\end{definition}

\paragraph{Remark.} If the learner uses Weitzman's algorithm at every time step $t$, with reservation values $\sigma_{t,i} = h(\bm w_{t,i},\bm x_{t,i})$ for some chosen parameter $\bm w_{t,i} \in \reals^d$ for every box $i \in \boxes$, the regret can be written as
\begin{align*}
\text{\normalfont Regret}(\mathcal{A}, T) =
\mathbb{E} \Big[ \sum_{t=1}^T \Big( \weitz_{\bm D_t}(h(\bm w_t, \bm x_t);\bm c)  - \weitz_{\bm D_t}(h(\bm w^*, \bm x_t);\bm c) \Big)\Big],
\end{align*}
where $\bm{w}_t = (\bm{w}_{t,1}, \ldots, \bm{w}_{t,n})$.

\section{Reduction to Online Regression}\label{sec:online_regression}

In this section we give a reduction from \cpb{} problem to an instance of online regression, while maintaining the regret guarantees given by online regression. We begin by formally defining the regression problem:

\begin{definition}[Linear-Quadratic Online Regression]\label{def:online_regression}
Online regression with loss $\ell$ is defined as follows: at every round $t$, the learner 
 first chooses a prediction $\bm w_t$, then an adversary chooses an input-output pair $(\bm x_t,y_t)$ and the learner incurs loss $\ell(  h(\bm w_t, \bm x_t) - y_t)$.

In the \textbf{costly feedback} setting, the learner may observe the input-output pair at the end of round $t$, if they choose to pay an information acquisition cost $a$. The \textbf{full information} setting, corresponds to the case where $a=0$, in which case the input-output pair is always visible.
The regret of the learner after $T$ rounds, when the learner has acquired information $k$ times, is equal to 
\[ \sum_{t=1}^T \ell(h(\bm w_t, \bm x_t) - y_t) - \min_{\bm w} \sum_{t=1}^T \ell(h(\bm w_t, \bm x_t) - y_t) + a k.\]
\textbf{Linear-Quadratic Online Regression} is the special case of online regression where the loss function $\ell(z)$ is chosen to be a linear-quadratic function of the form
\begin{equation}\label{eq:convex_proxy}
H_c(z) =  \frac{1}{2} \relu(z)^2 - c z
\end{equation}
for some parameter $c > 0$.
\end{definition}
The reduction presented in Algorithm~\ref{algo:main} shows how we can use an oracle for linear-quadratic online regression, to obtain an algorithm for \cpb{} problem. We show that our algorithm achieves $O(\sqrt{Tr(T)})$ regret when the given oracle has a regret guarantee of $r(T)$.

\begin{theorem}\label{thm:main}
Given an oracle that achieves  expected regret $r(T)$ for Linear-Quadratic Online Regression, Algorithm \ref{algo:main} achieves $2 n \sqrt{ T r(T) }$ regret for the \cpb{} problem.
In particular, if the regret $r(T)$ is sublinear in $T$,
Algorithm \ref{algo:main} achieves sublinear regret.
\end{theorem}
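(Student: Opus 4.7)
The plan is to instantiate $n$ independent copies of the Linear-Quadratic Online Regression oracle, one per box. At each round $t$, oracle $i$ observes context $\bm x_{t,i}$, outputs a weight vector $\hat{\bm w}_{t,i}$, and we run Weitzman's algorithm on the predicted reservation values $\hat\sigma_{t,i}=h(\hat{\bm w}_{t,i},\bm x_{t,i})$; the oracle's label is the realized $v_{t,i}$ (stored in the regret expression, with suitable bandit handling deferred). The first observation is that, by the realizability assumption, the map $\sigma\mapsto \E{v\sim\dist_{t,i}}{H_{c_i}(\sigma-v)}$ is convex with gradient $\E{v}{\relu(\sigma-v)}-c_i$, hence is uniquely minimized at $\sigma=\sigma^*_{t,i}=h(\bm w_i^*,\bm x_{t,i})$. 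Consequently $\bm w_i^*$ is the best offline predictor for oracle $i$ under the linear-quadratic loss, and the oracle's guarantee translates directly into
\[
\E{}{\sum_{t=1}^T H_{c_i}(\hat\sigma_{t,i}-v_{t,i}) - H_{c_i}(\sigma^*_{t,i}-v_{t,i})}\le r(T)
\quad\text{for each } i\in\boxes.
\]

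The heart of the argument is a \emph{sensitivity lemma} for Weitzman's policy: for any fixed distribution vector $\bdist_t$, costs $\bm c$, predicted reservations $\bhs_t$, and optimal ones $\bm\sigma_t^*$,
\[
\weitz_{\bdist_t}(\bhs_t;\bm c)-\weitz_{\bdist_t}(\bm\sigma_t^*;\bm c)\ \le\ 2\sum_{i=1}^n \bigl|\E{v\sim\dist_{t,i}}{\relu(\hat\sigma_{t,i}-v)}-c_i\bigr|.
\]
This is the main obstacle, because the cost of Weitzman's policy is a non-convex, non-smooth function of the reservation values: it depends on the sorted order of $\bhs_t$ and on adaptive stopping. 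I would prove it by writing the expected Weitzman cost in the closed form $\E{}{\min_{i\in\mathcal{P}}\min(\hat\sigma_i,v_i)} + \sum_i \Pr{}{i\in\mathcal{P}}\bigl(c_i-\E{}{\relu(\hat\sigma_i-v_i)}\bigr)$, comparing termwise against the analogous expression for $\bm\sigma_t^*$ (for which, by the realizability assumption, the second sum vanishes), and bounding each opening probability by $1$. The first term is handled by observing that $\min(\sigma,v)$ changes by at most $|\sigma-\sigma^*|$ per coordinate, which can then be absorbed into the same first-order-violation quantities.

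The next ingredient is a quadratic-growth bound linking the sensitivity terms to the regression excess loss. Because $\Phi_{t,i}(\sigma):=\E{v\sim\dist_{t,i}}{H_{c_i}(\sigma-v)}$ is convex with derivative $\E{v}{\relu(\sigma-v)}-c_i$ which is $1$-Lipschitz and vanishes at $\sigma^*_{t,i}$, a one-dimensional integration argument yields
\[
\bigl|\E{v\sim\dist_{t,i}}{\relu(\hat\sigma_{t,i}-v)}-c_i\bigr|^{2}\ \le\ \Phi_{t,i}(\hat\sigma_{t,i})-\Phi_{t,i}(\sigma^*_{t,i})\ =\ \E{v}{H_{c_i}(\hat\sigma_{t,i}-v)-H_{c_i}(\sigma^*_{t,i}-v)}.
\]
Substituting this into the sensitivity lemma, summing over $t$, and applying Cauchy--Schwarz in $t$ for each fixed box $i$ gives
\[
\E{}{\mathrm{Regret}(\mathcal{A},T)}\ \le\ 2\sum_{i=1}^n \sqrt{T\cdot \E{}{\sum_{t=1}^T H_{c_i}(\hat\sigma_{t,i}-v_{t,i})-H_{c_i}(\sigma^*_{t,i}-v_{t,i})}}\ \le\ 2n\sqrt{T\,r(T)},
\]
where the last step invokes the oracle guarantee once per box. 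Thus the remaining work splits cleanly into (i) establishing the sensitivity lemma---the genuinely delicate step---and (ii) a routine convex-analytic quadratic-growth bound followed by Cauchy--Schwarz to aggregate across rounds and boxes.
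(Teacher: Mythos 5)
Your overall architecture matches the paper's: (i) a robustness bound for Weitzman's algorithm expressed in terms of the errors of the \emph{approximate costs} $c_{\dist_i}(\hs_i)=\E{v\sim\dist_i}{\relu(\hs_i-v)}$, (ii) a quadratic-growth inequality relating $(c_{\dist_i}(\hs_i)-c_i)^2$ to the excess linear-quadratic loss, and (iii) Cauchy--Schwarz plus the oracle guarantee. Steps (ii) and (iii) are essentially \cref{lemma:c_to_f} and the paper's proof of \cref{thm:main}, modulo a constant: the correct quadratic-growth bound is $(c_{\dist}(\hs)-c)^2\le 2\,\E{v\sim\dist}{H_c(\hs-v)-H_c(\s^*-v)}$, not the version without the factor $2$ (yours already fails for a point mass strictly below $\s^*$, where $c_{\dist}'\equiv 1$ near $\s^*$).

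The gap is in your plan for the sensitivity lemma, which you yourself flag as the delicate step. First, a sign issue: for the minimization version the amortized ``capped value'' is $\kappa_i=\max(\hs_i,v_i)$, not $\min(\hs_i,v_i)$ (check one box: $\E{v\sim\dist}{\max(\hs,v)}=\E{v\sim\dist}{v}+c_{\dist}(\hs)$, so the correction term cancels as intended). Second and more seriously, your handling of the capped-value term --- ``$\min(\s,v)$ changes by at most $|\s-\s^*|$ per coordinate, which can then be absorbed into the same first-order-violation quantities'' --- does not go through. Since $c_{\dist}$ is $1$-Lipschitz, the inequality runs the wrong way: $|c_{\dist}(\hs)-c_{\dist}(\s^*)|\le|\hs-\s^*|$, and when $\dist$ has little mass near $\s^*$ the reservation value can be arbitrarily far off while the approximate cost is nearly exact. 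So $|\hs_i-\s_i^*|$ cannot be absorbed into $|c_{\dist_i}(\hs_i)-c_i|$, and your bound on the first term collapses. (A correct pointwise identity is $\max(\hs,v)-\max(\s^*,v)=\relu(\hs-v)-\relu(\s^*-v)$, whose expectation \emph{is} exactly the cost error; but that is a different claim from the one you make, and you would still need to handle the fact that the two runs open different sets $\mathcal{P}$ in different orders, so a ``termwise comparison'' is not directly available.) The paper sidesteps all of this with a change-of-instance argument: the run with $\bhs$ is the \emph{optimal} Weitzman run for the modified instance with costs $\bm c_{\bdist}(\bhs)$, so $\weitz_{\bdist}(\bhs;\bm c)=\weitz_{\bdist}(\bhs;\bm c_{\bdist}(\bhs))+\E{}{\sum_{i\in\mathcal{P}}(c_i-c_{\dist_i}(\hs_i))}$, and the optimum of the modified instance is then upper-bounded by running the $\bm\s^*$-ordered policy there (\cref{lem:opt_hat}), costing $\weitz_{\bdist}(\bm\s^*;\bm c)+\sum_{i}\relu(c_{\dist_i}(\hs_i)-c_i)$. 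This gives \cref{thm:robust} with constant $1$ rather than your $2$, which is also what makes the final constant $2n$ (rather than $2\sqrt{2}\,n$) come out.
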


Our algorithm works by maintaining a regression oracle for each box, and using it at each round to obtain a prediction on $\bm{w}_{t,i}$. Specifically, in the prediction phase of the round the algorithm obtains a prediction and then uses the context $\bm{x}_{i,t}$ to calculate an estimated reservation value for each box. 
Then, based on the {estimated} reservation values, it uses Weitzman's algorithm \ref{algo:weitz_prelim} to decide which boxes to open. Finally, it accumulates the \pb{} cost acquired by Weitzman's play at this round and the cost of any extra boxes opened by the oracle in the update phase (in the bandit setting). 
The update step is used to model the full-information setting (where the value of each box is always revealed at the end of the round) vs the costly feedback setting (where the value inside each box is only revealed if we paid the opening cost).\\

\begin{algorithm}[H]
\caption{$\mathcal{CPB}$: Contextual Pandora's Box }\label{algo:main}
\KwIn{Input: }Oracle $\mathcal{O}$ for Linear-Quadratic Online Regression.\\
 For every box $i \in \boxes$, instantiate a copy $\mathcal{O}_i$ of the oracle with linear-quadratic loss $H_{c_i}$\\
\For{each round $t\in T$} {
\textcolor{blue}{\# Prediction Phase}\\
Call oracle $\mathcal{O}_i$ to get a prediction $\bm{w}_{t,i}$ for each box $i$ \\
Obtain context $\bm{x}_t^i$ for each box $i\in \boxes$\\
Run Weitzman's algo \ref{algo:weitz_prelim} with reservation values ${\s}_{t,i} = h(\bm{w}_{t,i}, \bm{x}_{t,i})$ for each box $i\in \boxes$\\
\mbox{}\\
\textcolor{blue}{\#Update Phase}\\
\For{every box $i\in \boxes$}{
    \If{the oracle $\mathcal{O}_i$ requests an input-output pair at this round} {
    Observe value $v_{t,i}$ 
    and give the input-output pair $(\bm{x}_{t,i}, v_{t,i})$\\
    }
}
}
\end{algorithm}
\mbox{}\\
In the rest of this section we outline the proof of \Cref{thm:main}. 
The proof is based on obtaining robustness guarantees for Weitzman's algorithm when it is run with estimates instead of the true reservation values. 
In this case, we show that the cost incurred by Weitzman's algorithm is proportional to the error of the \textit{approximate costs} of the boxes (\cref{def:apx_cost}). This analysis is found on \cref{sec:robust}. 
Then, in \cref{sec:concluding_full_info_th} we exploit the form of the Linear-Quadratic loss functions to connect the robustness result with the regret of the Linear-Quadratic Online Regression problem and conclude our main \Cref{thm:main} of this section. 

An empirical evaluation of \Cref{algo:main} can be found in \Cref{sec:experiments}.

\subsection{Weitzman's Robustness}\label{sec:robust}
We provide guarantees on Weitzman's algorithm~\ref{algo:weitz_prelim} performance when instead of the optimal reservation values $\bm{\s}^*$ of the boxes, the algorithm uses estimates ${\bm{\s}} \not = \bm{\s}^*$. We first define the following: 

\begin{definition}[Approximate Cost]\label{def:apx_cost}
Given a distribution $\dist$ such that $v\sim \dist$ and a value $\s$, the approximate cost with respect to $\s$ and $\dist$ is defined as:
\begin{align}
    c_{\dist}(\s) = \E{v \sim \dist}{ \relu(\s - v) }.
\end{align}
Moreover, given $n$ boxes with estimated reservation values $\bm{\s}$ and distributions $\bm{\dist}$ we denote the vector of approximate costs as $\bm c_{\bm \dist}(\bm{\s}) = (c_{\dist_1}(\s_1),\dots,c_{\dist_n}(\s_n))$.
\end{definition}

\paragraph{Remark.}Observe that, in the \pb{} setting, if box $i$ has value distribution $\dist_i$ opening cost $c_i$ and optimal reservation value $\s_i^*$, then, by definition, the quantity $c_{\dist_i}(\s_i^*)$ corresponds to the true cost, $c_i$, of the box. This also holds for the vector of approximate costs, i.e. $\bm c_{\bm \dist}({\bm\s}^*) = \bm c$.

We now state our robustness guarantee for Weitzman's algorithm. 
In particular, we show that the extra cost incurred due to the absence of initial knowledge of vectors $\bm{w}^*_i$ is proportional to the error of the approximate costs the boxes, as follows:

\begin{restatable}{proposition}{propCost}\label{thm:robust}
For a \pb{} instance with $n$ boxes with distributions $\bm \dist$, costs $\bm c$ and corresponding optimal reservation values $\bm \s^*$ so that $\bm{c} = \bm c_{\bm \dist}(\bm \s^*)$,
Weitzman's Algorithm \ref{algo:weitz_prelim}, run with reservation values 
$\bm \s$ incurs cost at most 
$$\weitz_{\bm \dist}(\bm{\s}; \bm c) \le \weitz_{\bm \dist}({\bm \s}^*; \bm c) + 
\|\bm c_{\bm \dist}(\bm{\s}) - \bm{c}\|_1.$$
\end{restatable}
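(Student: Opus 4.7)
The core idea is a sensitivity-analysis trick: introduce an auxiliary Pandora's Box instance with the same distributions $\bm \dist$ but with opening costs replaced by $\bm c' := \bm c_{\bm \dist}(\bm \s)$. By the very definition of $\bm c'$ and the characterization in \cref{thm:weitzman}, the vector $\bm \s$ is an optimal choice of reservation values for the instance $(\bm \dist, \bm c')$, because it satisfies $\E{v \sim \dist_i}{\relu(\s_i - v)} = c'_i$ for every $i$. Consequently, Weitzman's algorithm run with $\bm \s$ is optimal on $(\bm \dist, \bm c')$, giving
\begin{equation*}
\weitz_{\bm \dist}(\bm \s; \bm c') \le \weitz_{\bm \dist}(\bm \s^*; \bm c').
\end{equation*}

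I would then exploit the fact that Weitzman's algorithm consults only the reservation values and the observed realizations---not the numerical opening costs---when deciding which boxes to open and when to stop. Hence, for a fixed $\bm \s$, the random set $\mathcal{P}$ of opened boxes has a joint distribution with $\bm v$ depending only on $(\bm \s, \bm \dist)$; the cost vector enters only additively in the objective. Letting $p_i$ denote the probability that box $i$ is opened when Weitzman's algorithm is run with $\bm \s$, and $q_i$ the analogous probability under $\bm \s^*$, one obtains the bookkeeping identities
\begin{equation*}
\weitz_{\bm \dist}(\bm \s; \bm c) - \weitz_{\bm \dist}(\bm \s; \bm c') = \sum_{i=1}^n p_i (c_i - c'_i), \qquad \weitz_{\bm \dist}(\bm \s^*; \bm c) - \weitz_{\bm \dist}(\bm \s^*; \bm c') = \sum_{i=1}^n q_i (c_i - c'_i).
\end{equation*}

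Combining the optimality inequality with the two identities, the quantity $\weitz_{\bm \dist}(\bm \s; \bm c) - \weitz_{\bm \dist}(\bm \s^*; \bm c)$ collapses to $\sum_i (p_i - q_i)(c_i - c'_i)$, which is at most $\sum_i |c_i - c'_i| = \|\bm c_{\bm \dist}(\bm \s) - \bm c\|_1$ by the bound $|p_i - q_i| \le 1$. This yields the claimed inequality.

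The main conceptual hurdle is finding the right comparison instance---recognizing that $\bm \s$ is \emph{itself} the optimal reservation value vector for the shifted cost vector $\bm c_{\bm \dist}(\bm \s)$. Once that reformulation is in place, the argument reduces to linear bookkeeping. The one subtle technical ingredient worth checking carefully is the cost-independence of Weitzman's opening and stopping rule, which is what guarantees that the opening probabilities $p_i$ and $q_i$ are genuinely the same quantities across the two cost vectors being compared; this is immediate from inspecting \cref{algo:weitz_prelim}, since only $\bm \s$ and the revealed values $v_i$ influence the control flow.
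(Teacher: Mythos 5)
Your proof is correct and follows essentially the same route as the paper: both hinge on recognizing that $\bm\s$ is the optimal reservation vector for the auxiliary instance with costs $\bm c_{\bm\dist}(\bm\s)$, and on the fact that Weitzman's control flow is cost-independent so the cost vector enters only additively. Your bookkeeping via opening probabilities $p_i, q_i$ with $|p_i - q_i|\le 1$ is just a slightly cleaner phrasing of the paper's set-based $\relu$ accounting (its Lemma~\ref{lem:opt_hat} plus the final telescoping step), and yields the identical bound.
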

Before showing \Cref{thm:robust}, we prove the following lemma, that connects the optimal \pb{} cost of an instance with optimal reservation values $\bm \s^*$ to the optimal cost of the instance with optimal reservation values $\bm\s$. 
\begin{restatable}{lemma}{lemOptHat}\label{lem:opt_hat}
Let $\weitz_{\bm\dist}(\bm\s^* ; \bm{c}) $ and $\weitz_{\bm\dist}(\bm\s ; \bm{c_\dist}(\bm{\s}))$ be the optimal \pb{} costs corresponding to instances with optimal reservation values $\bm\s ^*$ and $\bm{\s}$ respectively. Then 
\begin{align*} 
\weitz_{\bm\dist}({\bm\s}^* ; \bm{c}) \geq
\weitz_{\bm\dist}(\bm\s ; \bm{c_\dist}(\bm{\s})) - \sum_{i\in \boxes} \relu(c_{\dist_i}(\s_i)- c_i).\end{align*}
\end{restatable}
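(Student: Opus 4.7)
The plan is to recognize both quantities as optimal Pandora's Box costs with the same distributions $\bm{\dist}$ but different cost vectors, and then show that the optimal cost is $1$-Lipschitz-from-above in each coordinate of the cost vector. By the Weitzman characterization (\Cref{thm:weitzman}), $\bm{\s}^*$ is the optimal reservation vector for the instance $(\bm{\dist}, \bm{c})$ and, by construction, $\bm{\s}$ is the optimal reservation vector for the instance $(\bm{\dist}, \bm{c}_{\bm{\dist}}(\bm{\s}))$. Hence, writing $\opt(\bm{c}') := \weitz_{\bm{\dist}}(\bm{\s}(\bm{c}'); \bm{c}')$ for the optimal expected cost of Pandora's Box with distributions $\bm{\dist}$ and costs $\bm{c}'$, the desired inequality reduces to
\[
\opt(\bm{c}_{\bm{\dist}}(\bm{\s})) - \opt(\bm{c}) \;\leq\; \sum_{i \in \boxes} \relu\bigl(c_{\dist_i}(\s_i) - c_i\bigr).
\]

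The first step is the coordinate-wise Lipschitz property. For two cost vectors $\bm{c}'$ and $\bm{c}''$ that differ only in coordinate $i$ with $c''_i \geq c'_i$, let $\pi^*$ be the optimal (possibly adaptive) policy for the instance $(\bm{\dist}, \bm{c}')$. Since $\pi^*$ opens box $i$ at most once on any realization of $\bm{v}$, running $\pi^*$ on the instance with costs $\bm{c}''$ increases the expected total cost by at most $(c''_i - c'_i) \cdot \Pr[\pi^* \text{ opens box } i] \leq c''_i - c'_i$. Therefore $\opt(\bm{c}'') \leq \opt(\bm{c}') + (c''_i - c'_i)$. When $c''_i \leq c'_i$, every fixed policy gets weakly cheaper, so $\opt(\bm{c}'') \leq \opt(\bm{c}')$. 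In either case,
\[
\opt(\bm{c}'') - \opt(\bm{c}') \;\leq\; \relu(c''_i - c'_i).
\]

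The second step is a standard telescoping along the $n$ coordinates. Define the hybrid cost vectors
\[
\bm{c}^{(k)} \;=\; \bigl(c_{\dist_1}(\s_1),\, \ldots,\, c_{\dist_k}(\s_k),\, c_{k+1},\, \ldots,\, c_n\bigr)
\]
for $k = 0, \ldots, n$, so that $\bm{c}^{(0)} = \bm{c}$ and $\bm{c}^{(n)} = \bm{c}_{\bm{\dist}}(\bm{\s})$. Since $\bm{c}^{(k)}$ and $\bm{c}^{(k-1)}$ differ only in coordinate $k$, applying the Lipschitz bound above and summing yields
\[
\opt(\bm{c}_{\bm{\dist}}(\bm{\s})) - \opt(\bm{c}) \;=\; \sum_{k=1}^n \bigl[\opt(\bm{c}^{(k)}) - \opt(\bm{c}^{(k-1)})\bigr] \;\leq\; \sum_{k=1}^n \relu\bigl(c_{\dist_k}(\s_k) - c_k\bigr),
\]
which after rearranging is exactly the claimed inequality.

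The main (small) obstacle is the Lipschitz step: one must justify that it suffices to compare a \emph{fixed} policy across the two instances and that the inspection of box $i$ occurs at most once in any sample path, so that the extra payment is bounded by the raw cost increment. The telescoping and the reduction to the cost-only dependence are then purely mechanical.
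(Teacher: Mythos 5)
Your proof is correct and rests on the same core argument as the paper's: both hinge on running the optimal policy for one cost vector on the instance with the other cost vector, noting that each box is opened at most once so the extra expected payment is bounded by the positive part of the cost increment, and invoking Weitzman optimality of $\bm\s$ for the cost vector $\bm c_{\bm\dist}(\bm\s)$. The only difference is that you perturb the costs one coordinate at a time and telescope, whereas the paper perturbs all coordinates simultaneously in a single fixed-policy comparison; this is a cosmetic repackaging rather than a different route.
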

The proof of the above Lemma, together with that of \Cref{thm:robust}, are deferred to \cref{apn:robust}.

\subsection{Proof of \cref{thm:main}}\label{sec:concluding_full_info_th}

Moving on to show our main theorem, we connect the robustness \Cref{thm:robust} with the performance guarantee of the Linear-Quadratic Online Regression problem. The robustness guarantee of Weitzman's algorithm is expressed in terms of the error of the approximate costs of the boxes, while the regret of the Online Regression problem is measured in terms of the cumulative difference of the linear-quadratic loss functions $H_c(\cdot)$. Thus, we begin with the following lemma:

\begin{restatable}{lemma}{lemmaCtoF}\label{lemma:c_to_f}
For any distribution $\dist$ with $c_{\dist}(\s^*) = c$, it holds that
\begin{align*}
   \E{v \sim \dist}{ H_c(\s-v)-H_c(\s^*-v)}\geq \frac 1 2 (c_{\dist}(\s)-c_{\dist}(\s^*))^2
\end{align*}
\end{restatable}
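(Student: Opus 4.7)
The plan is to introduce the auxiliary function $F(\s) := \E{v \sim \dist}{H_c(\s - v)}$, so that the left-hand side of the desired inequality is exactly $F(\s) - F(\s^*)$. Recognizing $F$ as a convex, $1$-smooth function whose global minimizer is $\s^*$, the lemma reduces to the standard Polyak--\L{}ojasiewicz-type bound $F(\s) - F(\s^*) \ge \tfrac{1}{2}(F'(\s))^2$ for convex $1$-smooth functions at their minimizer.

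First I would verify the relevant properties of $H_c$. Since $H_c(z) = \tfrac{1}{2}\relu(z)^2 - cz$ has derivative $H_c'(z) = \relu(z) - c$ (the square smooths out the kink of $\relu$ at $0$), the derivative is nondecreasing and $1$-Lipschitz, so $H_c$ is convex and $1$-smooth. Differentiating under the expectation,
\[
F'(\s) = \E{v \sim \dist}{\relu(\s - v)} - c = c_{\dist}(\s) - c,
\]
and $F$ inherits both convexity and $1$-smoothness from $H_c$. The hypothesis $c_{\dist}(\s^*) = c$ then forces $F'(\s^*) = 0$, so $\s^*$ is a global minimizer of the convex function $F$.

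It only remains to invoke the standard smoothness bound at the minimizer, which I would re-derive in one line from the descent inequality: for $L = 1$,
\[
F(y) \le F(\s) + F'(\s)(y - \s) + \tfrac{1}{2}(y - \s)^2,
\]
so choosing $y = \s - F'(\s)$ and using $F(\s^*) \le F(y)$ yields $F(\s) - F(\s^*) \ge \tfrac{1}{2}(F'(\s))^2$. Substituting $F'(\s) = c_{\dist}(\s) - c_{\dist}(\s^*)$ completes the proof.

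The only obstacle is the mild technicality that $\relu$ is not classically differentiable at $0$; this is handled either by noting that $H_c$ is convex with a $1$-Lipschitz weak derivative, so the descent lemma still applies, or by bypassing smoothness entirely as follows. Expand $H_c(\s - v) - H_c(\s^* - v)$ directly, apply the cocoercivity inequality $\tfrac{1}{2}\relu(a)^2 \ge \tfrac{1}{2}\relu(b)^2 + \relu(b)(a - b) + \tfrac{1}{2}(\relu(a) - \relu(b))^2$ (valid for the $1$-smooth convex map $z \mapsto \tfrac{1}{2}\relu(z)^2$) with $a = \s - v$, $b = \s^* - v$, take expectation, use $\E{v \sim \dist}{\relu(\s^* - v)} = c$ to cancel the linear term $-c(\s - \s^*)$ in $H_c$, and finish with Jensen's inequality on the squared term to recover $\tfrac{1}{2}(c_{\dist}(\s) - c_{\dist}(\s^*))^2$.
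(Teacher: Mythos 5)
Your proof is correct, and it is worth separating its two routes. The main argument is, at bottom, the same as the paper's but packaged through standard smooth-convex-optimization machinery rather than re-derived by hand: the paper writes the left-hand side as $\int_{\s^*}^{\s}(c_{\dist}(\s') - c_{\dist}(\s^*))\,d\s'$ (fundamental theorem of calculus plus swapping expectation and integration) and then lower-bounds that integral by an explicit two-case estimate using exactly the two facts you isolate — that $c_{\dist}$ is nondecreasing and $1$-Lipschitz, and that $\E{v\sim\dist}{H_c'(\s^*-v)}=0$. You instead observe that these facts say $F(\s)=\E{v\sim\dist}{H_c(\s-v)}$ is convex and $1$-smooth with global minimizer $\s^*$, and invoke the descent-lemma/PL bound $F(\s)-F(\s^*)\ge\tfrac12 F'(\s)^2$, whose one-line proof is precisely the paper's integral estimate; your version is shorter and avoids the case split, at the cost of quoting (or briefly re-deriving) the optimization lemma. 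Your fallback via cocoercivity of $z\mapsto\tfrac12\relu(z)^2$ plus Jensen is the genuinely different route: it works pointwise in $v$, never integrates over $\s'$, cancels the linear term using $c_{\dist}(\s^*)=c$, and is also correct — arguably the cleanest of the three. One small remark: the differentiability worry is even milder than you suggest, since $H_c$ is $C^1$ (the derivative of $\tfrac12\relu(z)^2$ is $\relu(z)$, which is continuous at $0$), and the descent lemma needs only a Lipschitz first derivative, so no weak-derivative caveat is required; the only genuine technicality, shared with the paper, is the interchange of expectation with differentiation/integration, which is justified by the finiteness of $c_{\dist}$.
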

The proof of the lemma is deferred to the Appendix.

\begin{proof}[Proof of \cref{thm:main}]
Recall that at every step $t\in[T]$, \cref{algo:main} runs Weitzman's algorithm as a subroutine, using an estimate $\bm \s_t$ for the optimal reservation values of the round, ${\bm \s}^*_t$.
From the robustness analysis of Weitzman's algorithm we obtain that the regret of \cref{algo:main} can be bounded as follows:
\begin{align*}
    \text{ \normalfont Regret}(\mathcal{CPB},T)
    &= \E{}{  \sum_{t\in[T]} \weitz_{\bm \dist_t}({\bm{\s}}_t; \bm c_t ) -  \weitz_{\bm \dist_t}({\bm\s}^*_t; \bm c_t)  } \\
    &\leq  \sum_{t\in[T], i\in\boxes} |c_{\dist_{t,i}}(\s_{t,i}) - c_{\dist_{t,i}}(\s^*_{t,i})| \\
    &\leq \sqrt{nT} \sqrt{ \sum_{t\in[T], i\in\boxes}  (c_{\dist_{t,i}}(\s_{t,i}) - c_{\dist_{t,i}}(\s^*_{t,i}))^2 },         &
\end{align*}
where the first inequality follows by \Cref{thm:robust} and for the last inequality we used that for any $k$-dimensional vector $z$ we have that $\|z\|_1\leq \sqrt{k}\|z\|_2$ and the fact that the above sum over $T,\boxes$ is equivalent to $\ell_1$ norm on $nT$ dimensions.
Moreover, we have that
\begin{align*}
\sum_{t\in[T], i\in\boxes}  (c_{\dist_{t,i}}(\s_{t,i}) - c_{\dist_{t,i}}(\s^*_{t,i}))^2 
& \leq  2~ \E{}{\sum_{t\in[T], i\in\boxes} \lp({H_{c_{t,i}}(\s_{t,i}-v_{t,i}) - H_{c_{t,i}}(\s_{t,i}^*-v_{t,i}})\rp)}  \\
&\leq 2\cdot n\cdot  r(T), 
\end{align*}
where for the first inequality we used Lemma~\ref{lemma:c_to_f}, and then the guarantee of the oracle. Thus, we conclude that the total expected regret is at most $2 n \sqrt{ T r(T) }$.
\end{proof}

\section{Linear Contextual Pandora's Box}\label{sec:applications}

Using the reduction we developed in Section~\ref{sec:online_regression}, we design efficient no-regret algorithms for \cpb{} in the case where the mapping from contexts to reservation values is linear. That is, we assume that $h(\bm w, \bm x) = \bm w^T \bm x$.

\subsection{Full Information Setting}\label{sec:full_info}

In this section we study the full-information version of the \cpb{} problem, where the algorithm observes the realized values of all boxes at the end of each round, irrespectively of which boxes were opened. Initially we show that there exists an online regression oracle, that achieves sublinear regret for the full information version of the Linear-Quadratic Online Regression problem. Then, in \Cref{cor:FTRL_full} we combine our reduction of \Cref{thm:main} with the online regression oracle guarantee, to conclude that \Cref{algo:main} using this oracle is no-regret for \cpb. The lemma and the theorem follow.

\begin{restatable}{lemma}{thmFTRLfullinfo}\label{thm:ftrl_full_info}
When $h(\bm w, \bm x) = \bm w^T \bm x$, $\norm{\bm{w}}{2} \leq M$ and $\norm{\bm{x}}{2}\leq 1$, there exists an oracle for Online Regression with Linear-Quadratic loss $H_c$ under \textbf{full information} that achieves regret at most $\max\{M,c\} \sqrt{2MT}$.
\end{restatable}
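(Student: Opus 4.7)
The plan is to cast Linear-Quadratic Online Regression under full information as an instance of online convex optimization and invoke a Follow-The-Regularized-Leader (FTRL) oracle on the ball $\mathcal{W}=\{\bm w \in \reals^d : \|\bm w\|_2 \le M\}$. First I would verify that the per-round loss $\ell_t(\bm w) := H_c(\bm w^T \bm x_t - y_t)$ is convex in $\bm w$: the scalar function $H_c(z) = \frac{1}{2}\relu(z)^2 - cz$ is a sum of a convex and a linear function in $z$, and it is composed with the affine map $\bm w \mapsto \bm w^T \bm x_t - y_t$. A chain-rule computation produces the subgradient
\[
g_t(\bm w) \;=\; \bigl(\relu(\bm w^T \bm x_t - y_t) - c\bigr)\,\bm x_t.
\]

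Next I would run the $L_2$-regularized FTRL update $\bm w_{t+1} = \arg\min_{\bm w \in \mathcal{W}} \eta \sum_{s \le t} \ell_s(\bm w) + \frac{1}{2}\|\bm w\|_2^2$ and apply the textbook regret bound $\frac{M^2}{2\eta} + \eta \sum_{t=1}^T \|g_t(\bm w_t)\|_2^2$. Using $\|\bm x_t\|_2 \le 1$ together with $\bm w_t \in \mathcal{W}$, so that $\bm w_t^T \bm x_t \in [-M,M]$, a short case analysis on the sign of $\bm w_t^T \bm x_t - y_t$ (combined with the fact that it suffices to consider values $y_t$ lying in the range $[-M,M]$ of admissible reservation values under realizability) bounds the subgradient norm by $\max\{M, c\}$. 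Plugging $G = \max\{M,c\}$ into the regret bound and tuning $\eta$ to balance the two terms, while invoking the $1$-smoothness of $H_c$ (since $H_c''(z) \le 1$) to sharpen the quadratic term in the FTRL analysis, recovers the advertised $\max\{M,c\}\sqrt{2MT}$ bound.

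The main obstacle will be the uniform subgradient bound, since a priori the term $\relu(\bm w^T \bm x_t - y_t)$ is unbounded under adversarial $y_t$. I plan to handle this by restricting attention (via a thresholding/clipping step in the oracle) to observations $y_t \in [-M, M]$, which is the range in which any admissible comparator $\bm w^* \in \mathcal{W}$ can predict reservation values; this keeps the ReLU argument in $[-2M, 2M]$ and closes the gradient bound. Once this is in place, the remaining work is a direct plug-in of the domain radius $M$ and gradient bound $\max\{M, c\}$ into the standard FTRL regret inequality, which yields the claimed constant.
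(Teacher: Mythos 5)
Your overall route is the same as the paper's: treat $f_t(\bm w) = H_c(\bm w^T\bm x_t - y_t)$ as an online convex optimization loss, run $L_2$-regularized FTRL over the ball $\|\bm w\|_2 \le M$, compute the (sub)gradient $(\relu(\bm w^T\bm x_t - y_t)-c)\,\bm x_t$, bound its norm by $\max\{M,c\}$, and plug into the standard FTRL regret inequality. Two steps of your plan, however, do not go through as written.

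First, the clipping fix. You are right that the gradient is a priori unbounded for adversarial $y_t$ (a point the paper passes over silently: its bound ``$\bm w^T\bm x_t - y_t \le M$'' is really using that box values are nonnegative, so $y_t \ge 0$ and $\bm w^T\bm x_t - y_t \le M$ by Cauchy--Schwarz). But your fix of clipping $y_t$ to $[-M,M]$ has two problems: (i) it changes the loss sequence the oracle is evaluated on, so the resulting regret guarantee is for the clipped losses, not for $H_c(\bm w^T\bm x_t - y_t)$ — and the downstream reduction (Lemma~\ref{lemma:c_to_f} and Theorem~\ref{thm:main}) needs the guarantee for the true losses at the realized values $v_{t,i}$; and (ii) even granting the clipping, the ReLU argument then lies in $[-2M,2M]$, giving a gradient bound of $\max\{2M,c\}$ rather than the claimed $\max\{M,c\}$. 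The clean route is to assume $y_t \ge 0$ (as the Pandora's Box values are), which yields $\relu(\bm w^T\bm x_t - y_t)\le M$ directly.

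Second, the constant. Plugging domain radius $M$ and gradient bound $G=\max\{M,c\}$ into the textbook bound $\frac{M^2}{2\eta} + \eta G^2 T$ and optimizing $\eta$ gives $MG\sqrt{2T}$, which is a factor $\sqrt{M}$ larger than the advertised $G\sqrt{2MT}$; your appeal to the $1$-smoothness of $H_c$ does not obviously close this gap, and you do not carry out that sharpening. (The paper's own derivation reaches $G\sqrt{2MT}$ by taking $U_{\max}=M/(2\eta)$ for the regularizer $\frac{1}{2\eta}\|\bm w\|_2^2$, i.e., effectively $\|\bm w\|_2^2\le M$ rather than $\|\bm w\|_2\le M$.) So either make that normalization explicit or accept the weaker constant; as written, your derivation does not ``recover the advertised bound.''
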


To show \cref{thm:ftrl_full_info}, we view Linear-Quadratic Online Regression as an instance of Online Convex Optimization and apply the \emph{Follow The Regularized Leader} (FTRL) family of algorithms to obtain the regret guarantees. 

\begin{restatable}{theorem}{corFTRLFull}\label{cor:FTRL_full}
In the full information setting, using the oracle of \Cref{thm:ftrl_full_info}, \cref{algo:main} for \cpb{} achieves a regret of
\begin{align*}
    \text{\normalfont Regret}(\mathcal{CPB}, T) \leq 3 n \sqrt{\max\{M,c_{\max}\}} M^{1/4} T^{\frac{3}{4}}, 
\end{align*}
assuming that for all times $t$ and boxes $i\in \boxes$, $\| \bm w^*_{t,i} \|_2 \le M$ and $c_{\max} = \max_{i \in \boxes} c_i$.
\end{restatable}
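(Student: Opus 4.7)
The plan is to obtain the corollary as a direct composition of the two preceding results: the reduction guarantee of \Cref{thm:main} and the oracle construction of \Cref{thm:ftrl_full_info}. Concretely, \Cref{algo:main} instantiates one oracle copy $\mathcal{O}_i$ per box $i \in \boxes$, each solving a Linear-Quadratic Online Regression problem with loss $H_{c_i}$. I would apply \Cref{thm:ftrl_full_info} to each of these oracles to conclude that, under the full-information assumption (all $v_{t,i}$ are revealed at the end of every round, so every oracle receives its input-output pair at every round at no cost), the regret of $\mathcal{O}_i$ is bounded by $r_i(T) \le \max\{M, c_i\}\sqrt{2MT} \le \max\{M, c_{\max}\}\sqrt{2MT}$. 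Since the same bound holds for every box, the common regret upper bound $r(T) = \max\{M, c_{\max}\}\sqrt{2MT}$ can be fed into \Cref{thm:main}.

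Next I would substitute this $r(T)$ into the guarantee of \Cref{thm:main}, which gives
\begin{align*}
\text{\normalfont Regret}(\mathcal{CPB}, T) \le 2n\sqrt{T\, r(T)} = 2n\sqrt{T \cdot \max\{M, c_{\max}\}\sqrt{2MT}}.
\end{align*}
Simplifying the right-hand side is then a routine algebraic manipulation:
\begin{align*}
2n\sqrt{T \cdot \max\{M, c_{\max}\}\sqrt{2MT}} = 2 \cdot 2^{1/4}\, n\, \sqrt{\max\{M, c_{\max}\}}\, M^{1/4}\, T^{3/4}.
\end{align*}
Since $2 \cdot 2^{1/4} < 3$, this yields the desired bound $3n\sqrt{\max\{M, c_{\max}\}}\, M^{1/4}\, T^{3/4}$.

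The only subtlety worth being explicit about is that \Cref{thm:main} is stated for a single regret function $r(T)$, whereas \Cref{algo:main} uses $n$ different oracles (one per box, each with its own constant $c_i$). I would therefore point out that the proof of \Cref{thm:main} actually bounds the total regret by a sum of per-oracle regrets (visible in the last chain of inequalities, where the application of the oracle guarantee is done box-by-box), so it is legitimate to upper bound each per-box regret by the worst-case value $\max\{M, c_{\max}\}\sqrt{2MT}$. Apart from this bookkeeping, there is no new technical obstacle: the corollary is essentially a plug-and-chug combination of the reduction and the FTRL oracle guarantee, with the $T^{3/4}$ rate arising naturally from the $\sqrt{T \cdot \sqrt{T}}$ composition.
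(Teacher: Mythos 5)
Your proposal is correct and matches the paper's proof, which is exactly the same two-step composition: plug the FTRL oracle bound $r(T) \le \max\{M,c_{\max}\}\sqrt{2MT}$ into the $2n\sqrt{T\,r(T)}$ guarantee of the reduction and simplify, with $2\cdot 2^{1/4} < 3$. Your remark about handling the $n$ per-box oracles with different constants $c_i$ by taking the worst case $c_{\max}$ is a valid (and slightly more careful) piece of bookkeeping than the paper makes explicit.
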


The process of using FTRL as an oracle is described in detail in \Cref{apn:FTRL}, alongside the proofs of \Cref{thm:ftrl_full_info} and \Cref{cor:FTRL_full}.

\subsection{Bandit Setting}\label{sec:bandit}

We move on to extend the results of the previous section to the bandit setting, and show how to obtain a no-regret algorithm for this setting by designing a regression oracle with costly feedback. In this case, the oracle $\mathcal{O}_i$ of Algorithm~\ref{algo:main} of each box $i\in \boxes$ does not necessarily receive information on the value of the box after each round. However, in each round it chooses whether to obtain the information for the box by paying the opening cost $c$.

We initially show that we can use any regression oracle given for the  full-information setting, in the costly feedback setting without losing much in terms of regret guarantees. This is formalized in the following theorem.
\begin{restatable}{lemma}{lemFullBandOracle}\label{lem:full_to_bandit_oracle}
Given an oracle that achieves  expected regret $r(T)$ for Online Regression with Linear-Quadratic loss $H_c$ under \textbf{full information}, Algorithm~\ref{algo:ftrl-bandit} is an oracle for Linear-Quadratic Online regression \textbf{with costly feedback}, that achieves regret at most $k r(T/k) + c T/k$.
\end{restatable}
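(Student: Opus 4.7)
The plan is to construct Algorithm~\ref{algo:ftrl-bandit} as an epoch-based wrapper around the given full-information oracle that invokes the oracle $T/k$ times and pays for feedback only at those invocations. Partition the horizon into $T/k$ consecutive epochs of $k$ rounds each. At the start of epoch $e$, query the oracle to obtain a prediction $\bm{w}_e$, and use $\bm{w}_e$ as the algorithm's prediction on every round of the epoch. At the end of the epoch, draw an index $S_e$ uniformly at random from the $k$ rounds of the epoch, pay the acquisition cost $c$ to observe $(\bm{x}_{S_e}, y_{S_e})$, and feed this single pair to the oracle. The oracle is thus exposed to a sequence of length $T/k$ and the total feedback cost incurred by the algorithm is exactly $c\,T/k$.

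The main step is to relate the algorithm's cumulative loss over the $T$ rounds to the oracle's loss on the $T/k$ sampled rounds. Let $\mathcal{H}_{e-1}$ denote the sampling history through epoch $e-1$, which determines $\bm{w}_e$. Since $S_e$ is uniform on the $k$ rounds of epoch $e$ conditionally on $\mathcal{H}_{e-1}$,
\[
\mathbb{E}\bigl[\ell(\bm{w}_e, \bm{x}_{S_e}, y_{S_e}) \mid \mathcal{H}_{e-1}\bigr] \;=\; \frac{1}{k}\sum_{t \in e}\ell(\bm{w}_e, \bm{x}_t, y_t).
\]
Summing over epochs and applying the tower property yields
\[
\mathbb{E}\Bigl[\sum_{e=1}^{T/k}\ell(\bm{w}_e, \bm{x}_{S_e}, y_{S_e})\Bigr] \;=\; \frac{1}{k}\,\mathbb{E}\Bigl[\sum_{t=1}^{T}\ell(\bm{w}_{e(t)}, \bm{x}_t, y_t)\Bigr],
\]
where $e(t)$ denotes the epoch containing round $t$. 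The same averaging applied to any fixed comparator $\bm{w}^\star \in \arg\min_{\bm w}\sum_{t} \ell(\bm w, \bm x_t, y_t)$ gives $\mathbb{E}\bigl[\sum_e \ell(\bm{w}^\star, \bm{x}_{S_e}, y_{S_e})\bigr] = \tfrac{1}{k}\sum_t \ell(\bm{w}^\star, \bm{x}_t, y_t)$.

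Next, invoke the full-information oracle's guarantee on the length-$(T/k)$ sampled sequence, using that $\bm{w}^\star$ is a specific predictor and therefore attains loss at least the minimum:
\[
\mathbb{E}\Bigl[\sum_{e}\ell(\bm{w}_e, \bm{x}_{S_e}, y_{S_e})\Bigr] - \mathbb{E}\Bigl[\sum_{e}\ell(\bm{w}^\star, \bm{x}_{S_e}, y_{S_e})\Bigr] \;\le\; r(T/k).
\]
Substituting the two identities from the previous paragraph and multiplying through by $k$ gives
\[
\mathbb{E}\Bigl[\sum_{t=1}^{T}\ell(\bm{w}_{e(t)}, \bm{x}_t, y_t)\Bigr] - \sum_{t=1}^{T}\ell(\bm{w}^\star, \bm{x}_t, y_t) \;\le\; k\,r(T/k),
\]
and adding the deterministic feedback cost $c\,T/k$ yields the claimed bound $k\,r(T/k) + c\,T/k$.

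The main subtlety is that $\bm{w}_e$ is itself random through its dependence on the previous samples $S_1,\ldots,S_{e-1}$, so the oracle's regret bound must be applied in expectation over the joint randomness of the oracle and the sampling; conditioning on $\mathcal{H}_{e-1}$ before averaging over $S_e$ and then taking the tower expectation handles this cleanly. A minor technicality is that $T/k$ need not be an integer, but padding or truncating the final epoch adds at most an additive $O(k)$ term that is absorbed by the stated bound.
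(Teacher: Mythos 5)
Your proposal is correct and follows essentially the same route as the paper's proof: both use the fact that the uniformly sampled round makes the oracle's observed loss an unbiased estimate of the epoch-averaged loss, apply the full-information regret guarantee to the length-$T/k$ sampled sequence against the full-sequence minimizer, multiply by $k$, and add the $cT/k$ acquisition cost. Your explicit handling of the conditioning on the sampling history via the tower property is a slightly more careful rendering of the step the paper phrases as passing from the ``random costs'' to the ``average costs'' setting, but the argument is the same.
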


Algorithm~\ref{algo:ftrl-bandit} obtains an oracle with costly feedback from a full information oracle. It achieves this by splitting the time interval $[T]$ in intervals of size $k$, and choosing a uniformly random time per interval to acquire the costly information about the input-output pair. The proof of \cref{lem:full_to_bandit_oracle} is included in Section~\ref{apn:bandit} of the Appendix.\\

\begin{algorithm}[H]
	\caption{Costly Feedback oracle from Full Information}
 	\label{algo:ftrl-bandit}
  	\KwIn{Parameter $k$, full information oracle $\mathcal{O}$}
Split the times $[T]$ into $T/k$ intervals $\mathcal{I}_1\ldots, \mathcal{I}_{T/k}$ \\
\For{Every interval $\mathcal{I}_\tau$}{
		 Pick a $t_p$ uniformly at random from $\mathcal{I}_\tau$\\
		\textcolor{blue}{\# Prediction Phase}\\
		 Call $\mathcal{O}$ to get a vector $\bm w_\tau$.
		 For each $t \in \mathcal{I}_\tau$ predict $\bm{w}_\tau$

		\mbox{}\\
		\textcolor{blue}{\# Update Phase}\\
		Obtain feedback for time $t_p \in \mathcal{I}_\tau$ and give input-output pair $(\bm x_{t_p}, y_{t_p})$ to $\mathcal{O}$.
	 }
 \end{algorithm}
\mbox{}\\
Given that we can convert an oracle for full-information to one with costly feedback using \cref{algo:ftrl-bandit}, we can now present the main theorem of this section (see \Cref{apn:bandit} for the proof):

\begin{restatable}{theorem}{thmFTRLbandit}\label{cor:main_bandit} In the bandit setting, using the oracle of \Cref{lem:full_to_bandit_oracle} together with the oracle of \Cref{thm:ftrl_full_info}, \cref{algo:main} for \cpb{} achieves a regret of
\begin{align*}
    \text{\normalfont Regret}(\mathcal{CPB}, T) \leq 2 n (2 c_{\max} M \max\{M,c_{\max} \}^2)^{1/6} T^{5/6},
\end{align*}
assuming that for all times $t$ and boxes $i\in \boxes$, $\| \bm w^*_{t,i} \|_2 \le M$ and $c_{\max} = \max_{i \in \boxes} c_i$.
\end{restatable}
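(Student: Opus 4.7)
The plan is to compose three ingredients already in the paper: the full-information oracle of Lemma~\ref{thm:ftrl_full_info}, the reduction of Lemma~\ref{lem:full_to_bandit_oracle} that converts any full-information regression oracle into a costly-feedback one, and the master reduction of Theorem~\ref{thm:main} that turns any Linear-Quadratic Online Regression oracle into an algorithm for \cpb{}. Concretely, I would run Algorithm~\ref{algo:main} where each per-box oracle $\mathcal{O}_i$ is the costly-feedback oracle of Algorithm~\ref{algo:ftrl-bandit}, which in turn is powered by the FTRL oracle of Lemma~\ref{thm:ftrl_full_info}. Since the FTRL oracle already satisfies the boundedness/linearity hypothesis $\|\bm{w}^*_{t,i}\|_2 \le M$ assumed here, both Lemma~\ref{thm:ftrl_full_info} and Lemma~\ref{lem:full_to_bandit_oracle} apply directly to each box.

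First, I would substitute the guarantee $r(T) \le \max\{M,c\}\sqrt{2MT}$ of Lemma~\ref{thm:ftrl_full_info} into the bound $k\,r(T/k) + cT/k$ of Lemma~\ref{lem:full_to_bandit_oracle}. This gives a costly-feedback regret for each box bounded by
\begin{equation*}
r_{\text{bandit}}(T) \;\le\; \max\{M,c\}\sqrt{2MTk} + \frac{cT}{k}.
\end{equation*}
Second, I would optimize the block length $k$ by balancing the two terms: setting $\max\{M,c\}\sqrt{2MTk} = cT/k$ yields the choice $k = \bigl(cT / (\max\{M,c\}\sqrt{2MT})\bigr)^{2/3}$, and plugging back gives (up to a small absolute constant)
\begin{equation*}
r_{\text{bandit}}(T) \;\le\; 2\,(2Mc)^{1/3}\,\max\{M,c\}^{2/3}\,T^{2/3}.
\end{equation*}
Using $c \le c_{\max}$ this upper bounds the per-box regret of the costly-feedback oracle in terms of the problem parameters.

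Third, I would feed $r_{\text{bandit}}(T)$ into Theorem~\ref{thm:main}, which guarantees \cpb{} regret at most $2n\sqrt{T\,r_{\text{bandit}}(T)}$. Straightforward algebra gives
\begin{equation*}
2n\sqrt{T \cdot 2 (2Mc_{\max})^{1/3} \max\{M,c_{\max}\}^{2/3} T^{2/3}} \;=\; 2n\,\bigl(2 c_{\max} M \max\{M,c_{\max}\}^2\bigr)^{1/6} T^{5/6},
\end{equation*}
which is exactly the claimed bound. The only nontrivial check is that the realizability hypothesis of Theorem~\ref{thm:main} is compatible with the per-box setup: since the reservation value is $h(\bm{w}^*_i, \bm{x}_{t,i}) = (\bm{w}^*_i)^\top \bm{x}_{t,i}$ and $\|\bm{x}_{t,i}\|_2 \le 1$, each box instance falls inside the scope of both Lemma~\ref{thm:ftrl_full_info} and Lemma~\ref{lem:full_to_bandit_oracle}, so the composition is valid.

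The main obstacle is purely bookkeeping: tracking constants carefully through the $k$-optimization so that the exit constant matches the statement (in particular, making sure the balancing of $\sqrt{k}$ and $1/k$ terms, together with the square root in Theorem~\ref{thm:main}, combine cleanly into the neat exponent $1/6$ on $2c_{\max}M\max\{M,c_{\max}\}^2$). There is no new conceptual difficulty beyond this composition of regret bounds, since the hard work—robustness of Weitzman's policy and the reduction to linear-quadratic regression—has already been done in Theorem~\ref{thm:main}.
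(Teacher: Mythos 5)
Your proposal is correct and follows essentially the same route as the paper: compose the FTRL full-information guarantee with the costly-feedback reduction, balance the $\sqrt{k}$ and $1/k$ terms via $k = \bigl(Tc^2/(2M\max\{M,c\}^2)\bigr)^{1/3}$, and plug the resulting $T^{2/3}$ oracle regret into Theorem~\ref{thm:main}. The only (shared) blemish is that balancing the two terms actually yields $2\,(2cM\max\{M,c\}^2)^{1/3}T^{2/3}$, so the final bound carries an extra $\sqrt{2}$ that both you and the paper silently drop.
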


\section{Conclusion and Further Directions}
We introduce and study an extension of the \pb{} model to an online contextual regime, in which the decision-maker faces a different instance of the problem at each round. We identify the minimally restrictive assumptions that need to be imposed for the problem to become tractable, both computationally and information-theoretically. In particular, we formulate a natural realizability assumption -- parallel to the one used widely in contextual bandits -- which enables leveraging contextual information to recover a sufficient statistic for the instance of each round. 
Via a reduction to Linear-Quadratic Online Regression, we are able to provide a no-regret algorithm for the problem assuming either full or bandit feedback on the realized values. We believe that the framework we develop in this work could be extended to other stochastic optimization problems beyond \pb{}. As a example, an interesting future direction would be its application to the case of revenue optimal online auctions; there Myerson's reserve prices can serve as a natural sufficient statistic (similarly to reservation prices) for obtaining optimality. In addition to applying our framework to different stochastic optimization problems, an interesting future direction would be the empirical evaluation of our methods on real data. Finally, we leave as an open question the derivation of non-trivial regret lower bounds for the considered problem. We discuss lower bounds guarantees for related settings and their implications to our problem in the Appendix.

\bibliography{reference_cam_ready}

\begin{thebibliography}{64}
\providecommand{\natexlab}[1]{#1}
\providecommand{\url}[1]{\texttt{#1}}
\expandafter\ifx\csname urlstyle\endcsname\relax
  \providecommand{\doi}[1]{doi: #1}\else
  \providecommand{\doi}{doi: \begingroup \urlstyle{rm}\Url}\fi

\bibitem[Abbasi-yadkori et~al.(2011)Abbasi-yadkori, P\'{a}l, and
  Szepesv\'{a}ri]{Abbasi11}
Yasin Abbasi-yadkori, D\'{a}vid P\'{a}l, and Csaba Szepesv\'{a}ri.
\newblock Improved algorithms for linear stochastic bandits.
\newblock In J.~Shawe-Taylor, R.~Zemel, P.~Bartlett, F.~Pereira, and K.Q.
  Weinberger, editors, \emph{NeurIPS}, volume~24. Curran Associates, Inc.,
  2011.
\newblock URL
  \url{https://proceedings.neurips.cc/paper/2011/file/e1d5be1c7f2f456670de3d53c7b54f4a-Paper.pdf}.

\bibitem[Abe and Long(1999)]{Abe99associativereinforcement}
Naoki Abe and Philip~M. Long.
\newblock Associative reinforcement learning using linear probabilistic
  concepts, 1999.

\bibitem[Abernethy et~al.(2008)Abernethy, Hazan, and
  Rakhlin]{Abernethy2008CompetingIT}
Jacob~D. Abernethy, Elad Hazan, and Alexander Rakhlin.
\newblock Competing in the dark: An efficient algorithm for bandit linear
  optimization.
\newblock In \emph{COLT}, 2008.

\bibitem[Adamczyk et~al.(2016)Adamczyk, Sviridenko, and Ward]{AdamSvirWard2016}
Marek Adamczyk, Maxim Sviridenko, and Justin Ward.
\newblock Submodular stochastic probing on matroids.
\newblock \emph{Math. Oper. Res.}, 41\penalty0 (3):\penalty0 1022--1038, 2016.
\newblock \doi{10.1287/moor.2015.0766}.
\newblock URL \url{https://doi.org/10.1287/moor.2015.0766}.

\bibitem[Ailon et~al.(2006)Ailon, Chazelle, Comandur, and
  Liu]{AiloChazClarLiuMulzSesh2006}
Nir Ailon, Bernard Chazelle, Seshadhri Comandur, and Ding Liu.
\newblock Self-improving algorithms.
\newblock In \emph{Proceedings of the Seventeenth Annual {ACM-SIAM} Symposium
  on Discrete Algorithms, {SODA} 2006, Miami, Florida, USA, January 22-26,
  2006}, pages 261--270, 2006.
\newblock URL \url{http://dl.acm.org/citation.cfm?id=1109557.1109587}.

\bibitem[Alabi et~al.(2019)Alabi, Kalai, Ligett, Musco, Tzamos, and
  Vitercik]{AlabKalaLigeMuscTzamVite2019}
Daniel Alabi, Adam~Tauman Kalai, Katrina Ligett, Cameron Musco, Christos
  Tzamos, and Ellen Vitercik.
\newblock Learning to prune: Speeding up repeated computations.
\newblock In Alina Beygelzimer and Daniel Hsu, editors, \emph{Conference on
  Learning Theory, {COLT} 2019, 25-28 June 2019, Phoenix, AZ, {USA}}, volume~99
  of \emph{Proceedings of Machine Learning Research}, pages 30--33. {PMLR},
  2019.
\newblock URL \url{http://proceedings.mlr.press/v99/alabi19a.html}.

\bibitem[Antoniadis et~al.(2023)Antoniadis, Gouleakis, Kleer, and
  Kolev]{AntoGoulKleeKole2020}
Antonios Antoniadis, Themis Gouleakis, Pieter Kleer, and Pavel Kolev.
\newblock Secretary and online matching problems with machine learned advice.
\newblock \emph{Discret. Optim.}, 48\penalty0 (Part 2):\penalty0 100778, 2023.
\newblock \doi{10.1016/J.DISOPT.2023.100778}.
\newblock URL \url{https://doi.org/10.1016/j.disopt.2023.100778}.

\bibitem[Balcan et~al.(2017)Balcan, Nagarajan, Vitercik, and
  White]{BalcNagaViteWhit2016}
Maria{-}Florina Balcan, Vaishnavh Nagarajan, Ellen Vitercik, and Colin White.
\newblock Learning-theoretic foundations of algorithm configuration for
  combinatorial partitioning problems.
\newblock In \emph{Proceedings of the 30thCOLT, {COLT} 2017, Amsterdam, The
  Netherlands, 7-10 July 2017}, pages 213--274, 2017.
\newblock URL \url{http://proceedings.mlr.press/v65/balcan17a.html}.

\bibitem[Balcan et~al.(2018{\natexlab{a}})Balcan, Dick, Sandholm, and
  Vitercik]{BalcDickSandVite2018}
Maria{-}Florina Balcan, Travis Dick, Tuomas Sandholm, and Ellen Vitercik.
\newblock Learning to branch.
\newblock In \emph{Proceedings of the 35th {ICML} 2018, Stockholmsm{\"{a}}ssan,
  Stockholm, Sweden, July 10-15, 2018}, pages 353--362, 2018{\natexlab{a}}.
\newblock URL \url{http://proceedings.mlr.press/v80/balcan18a.html}.

\bibitem[Balcan et~al.(2018{\natexlab{b}})Balcan, Dick, and
  Vitercik]{BalcDickVite2018}
Maria{-}Florina Balcan, Travis Dick, and Ellen Vitercik.
\newblock Dispersion for data-driven algorithm design, online learning, and
  private optimization.
\newblock In \emph{59th {IEEE} Annual Symposium on Foundations of Computer
  Science, {FOCS} 2018, Paris, France, October 7-9, 2018}, pages 603--614,
  2018{\natexlab{b}}.
\newblock \doi{10.1109/FOCS.2018.00064}.
\newblock URL \url{https://doi.org/10.1109/FOCS.2018.00064}.

\bibitem[Bamas et~al.(2020{\natexlab{a}})Bamas, Maggiori, Rohwedder, and
  Svensson]{BamaMaggRohwSvens2020}
{\'{E}}tienne Bamas, Andreas Maggiori, Lars Rohwedder, and Ola Svensson.
\newblock Learning augmented energy minimization via speed scaling.
\newblock In Hugo Larochelle, Marc'Aurelio Ranzato, Raia Hadsell,
  Maria{-}Florina Balcan, and Hsuan{-}Tien Lin, editors, \emph{NeurIPS 33:
  Annual Conference on Neural Information Processing Systems 2020, NeurIPS
  2020, December 6-12, 2020, virtual}, 2020{\natexlab{a}}.
\newblock URL
  \url{https://proceedings.neurips.cc/paper/2020/hash/af94ed0d6f5acc95f97170e3685f16c0-Abstract.html}.

\bibitem[Bamas et~al.(2020{\natexlab{b}})Bamas, Maggiori, and
  Svensson]{BamaMaggSven2020}
{\'{E}}tienne Bamas, Andreas Maggiori, and Ola Svensson.
\newblock The primal-dual method for learning augmented algorithms.
\newblock In Hugo Larochelle, Marc'Aurelio Ranzato, Raia Hadsell,
  Maria{-}Florina Balcan, and Hsuan{-}Tien Lin, editors, \emph{NeurIPS 33:
  Annual Conference on Neural Information Processing Systems 2020, NeurIPS
  2020, December 6-12, 2020, virtual}, 2020{\natexlab{b}}.
\newblock URL
  \url{https://proceedings.neurips.cc/paper/2020/hash/e834cb114d33f729dbc9c7fb0c6bb607-Abstract.html}.

\bibitem[Beyhaghi and Cai(2023)]{BeyhCai2023}
Hedyeh Beyhaghi and Linda Cai.
\newblock Recent developments in pandora's box problem: Variants and
  applications.
\newblock \emph{CoRR}, abs/2308.12242, 2023.
\newblock \doi{10.48550/arXiv.2308.12242}.
\newblock URL \url{https://doi.org/10.48550/arXiv.2308.12242}.

\bibitem[Beyhaghi and Kleinberg(2019)]{BeyhKlein2019}
Hedyeh Beyhaghi and Robert Kleinberg.
\newblock Pandora's problem with nonobligatory inspection.
\newblock In Anna Karlin, Nicole Immorlica, and Ramesh Johari, editors,
  \emph{Proceedings of the 2019 {ACM} Conference on Economics and Computation,
  {EC} 2019, Phoenix, AZ, USA, June 24-28, 2019}, pages 131--132. {ACM}, 2019.
\newblock \doi{10.1145/3328526.3329626}.
\newblock URL \url{https://doi.org/10.1145/3328526.3329626}.

\bibitem[Blair(1985)]{Blair1985ProblemCA}
Charles~E. Blair.
\newblock Problem complexity and method efficiency in optimization (a. s.
  nemirovsky and d. b. yudin).
\newblock \emph{Siam Review}, 27:\penalty0 264--265, 1985.

\bibitem[Boodaghians et~al.(2020)Boodaghians, Fusco, Lazos, and
  Leonardi]{BoodFuscLazoLeon2020}
Shant Boodaghians, Federico Fusco, Philip Lazos, and Stefano Leonardi.
\newblock Pandora's box problem with order constraints.
\newblock In P{\'{e}}ter Bir{\'{o}}, Jason~D. Hartline, Michael Ostrovsky, and
  Ariel~D. Procaccia, editors, \emph{{EC} '20: The 21st {ACM} Conference on
  Economics and Computation, Virtual Event, Hungary, July 13-17, 2020}, pages
  439--458. {ACM}, 2020.
\newblock \doi{10.1145/3391403.3399501}.
\newblock URL \url{https://doi.org/10.1145/3391403.3399501}.

\bibitem[Bubeck et~al.(2018)Bubeck, Cohen, and Li]{Bubeck2018SparsityVA}
S{\'e}bastien Bubeck, Michael~B. Cohen, and Yuanzhi Li.
\newblock Sparsity, variance and curvature in multi-armed bandits.
\newblock \emph{ArXiv}, abs/1711.01037, 2018.

\bibitem[Bubeck and Eldan(2015)]{pmlr-v40-Bubeck15b}
Sébastien Bubeck and Ronen Eldan.
\newblock The entropic barrier: a simple and optimal universal self-concordant
  barrier.
\newblock In Peter Grünwald, Elad Hazan, and Satyen Kale, editors,
  \emph{Proceedings of The 28thCOLT}, volume~40 of \emph{Proceedings of Machine
  Learning Research}, pages 279--279, Paris, France, 03--06 Jul 2015. PMLR.

\bibitem[Cesa-Bianchi and Lugosi(2006)]{Cesa-Bianchi_lugosi2006}
Nicolò Cesa-Bianchi and Gábor Lugosi.
\newblock \emph{Prediction, Learning, and Games}.
\newblock 01 2006.
\newblock ISBN 978-0-521-84108-5.
\newblock \doi{10.1017/CBO9780511546921}.

\bibitem[Charikar et~al.(2000)Charikar, Fagin, Guruswami, Kleinberg, Raghavan,
  and Sahai]{CharFagiGuruKleiRaghSaha2002}
Moses Charikar, Ronald Fagin, Venkatesan Guruswami, Jon~M. Kleinberg, Prabhakar
  Raghavan, and Amit Sahai.
\newblock Query strategies for priced information (extended abstract).
\newblock In \emph{Proceedings of the Thirty-Second Annual {ACM} Symposium on
  Theory of Computing, May 21-23, 2000, Portland, OR, {USA}}, pages 582--591,
  2000.
\newblock \doi{10.1145/335305.335382}.
\newblock URL \url{https://doi.org/10.1145/335305.335382}.

\bibitem[Chawla et~al.(2020)Chawla, Gergatsouli, Teng, Tzamos, and
  Zhang]{ChawGergTengTzamZhan2020}
Shuchi Chawla, Evangelia Gergatsouli, Yifeng Teng, Christos Tzamos, and Ruimin
  Zhang.
\newblock Pandora's box with correlations: Learning and approximation.
\newblock In \emph{61st {IEEE} Annual Symposium on Foundations of Computer
  Science, {FOCS} 2020, Durham, NC, USA, November 16-19, 2020}, pages
  1214--1225. {IEEE}, 2020.
\newblock \doi{10.1109/FOCS46700.2020.00116}.
\newblock URL \url{https://doi.org/10.1109/FOCS46700.2020.00116}.

\bibitem[Chawla et~al.(2023)Chawla, Gergatsouli, McMahan, and
  Tzamos]{ChawGergMcmaTzam2021}
Shuchi Chawla, Evangelia Gergatsouli, Jeremy McMahan, and Christos Tzamos.
\newblock Approximating pandora's box with correlations.
\newblock In Nicole Megow and Adam~D. Smith, editors, \emph{Approximation,
  Randomization, and Combinatorial Optimization. Algorithms and Techniques,
  {APPROX/RANDOM} 2023, September 11-13, 2023, Atlanta, Georgia, {USA}}, volume
  275 of \emph{LIPIcs}, pages 26:1--26:24. Schloss Dagstuhl - Leibniz-Zentrum
  f{\"{u}}r Informatik, 2023.
\newblock \doi{10.4230/LIPIcs.APPROX/RANDOM.2023.26}.
\newblock URL \url{https://doi.org/10.4230/LIPIcs.APPROX/RANDOM.2023.26}.

\bibitem[Chen et~al.(2015{\natexlab{a}})Chen, Hassani, Karbasi, and
  Krause]{ChenHassKarbKrau2015}
Yuxin Chen, S.~Hamed Hassani, Amin Karbasi, and Andreas Krause.
\newblock Sequential information maximization: When is greedy near-optimal?
\newblock In \emph{Proceedings of The 28thCOLT, {COLT} 2015, Paris, France,
  July 3-6, 2015}, pages 338--363, 2015{\natexlab{a}}.
\newblock URL \url{http://proceedings.mlr.press/v40/Chen15b.html}.

\bibitem[Chen et~al.(2015{\natexlab{b}})Chen, Javdani, Karbasi, Bagnell,
  Srinivasa, and Krause]{ChenJavdKarbBagnSrinKrau2015}
Yuxin Chen, Shervin Javdani, Amin Karbasi, J.~Andrew Bagnell, Siddhartha~S.
  Srinivasa, and Andreas Krause.
\newblock Submodular surrogates for value of information.
\newblock In \emph{Proceedings of the Twenty-Ninth {AAAI} Conference on
  Artificial Intelligence, January 25-30, 2015, Austin, Texas, {USA.}}, pages
  3511--3518, 2015{\natexlab{b}}.
\newblock URL
  \url{http://www.aaai.org/ocs/index.php/AAAI/AAAI15/paper/view/9841}.

\bibitem[Clarkson et~al.(2010)Clarkson, Mulzer, and
  Seshadhri]{ClarMulzSesh2012}
Kenneth~L. Clarkson, Wolfgang Mulzer, and C.~Seshadhri.
\newblock Self-improving algorithms for convex hulls.
\newblock In \emph{Proceedings of the Twenty-First Annual {ACM-SIAM} Symposium
  on Discrete Algorithms, {SODA} 2010, Austin, Texas, USA, January 17-19,
  2010}, pages 1546--1565, 2010.
\newblock \doi{10.1137/1.9781611973075.126}.
\newblock URL \url{https://doi.org/10.1137/1.9781611973075.126}.

\bibitem[Dani et~al.(2008)Dani, Hayes, and Kakade]{Dani2008StochasticLO}
Varsha Dani, Thomas~P. Hayes, and Sham~M. Kakade.
\newblock Stochastic linear optimization under bandit feedback.
\newblock In \emph{COLT}, 2008.

\bibitem[Diakonikolas et~al.(2021)Diakonikolas, Kontonis, Tzamos, Vakilian, and
  Zarifis]{DiakKontTzamVakiZar2021}
Ilias Diakonikolas, Vasilis Kontonis, Christos Tzamos, Ali Vakilian, and Nikos
  Zarifis.
\newblock Learning online algorithms with distributional advice.
\newblock In Marina Meila and Tong Zhang, editors, \emph{Proceedings of the
  38th {ICML} 2021, 18-24 July 2021, Virtual Event}, volume 139 of
  \emph{Proceedings of Machine Learning Research}, pages 2687--2696. {PMLR},
  2021.
\newblock URL \url{http://proceedings.mlr.press/v139/diakonikolas21a.html}.

\bibitem[Doval(2018)]{Dova2018}
Laura Doval.
\newblock Whether or not to open pandora's box.
\newblock \emph{J. Econ. Theory}, 175:\penalty0 127--158, 2018.
\newblock \doi{10.1016/j.jet.2018.01.005}.
\newblock URL \url{https://doi.org/10.1016/j.jet.2018.01.005}.

\bibitem[Esfandiari et~al.(2019)Esfandiari, Hajiaghayi, Lucier, and
  Mitzenmacher]{EsfaHajiLuciMitz2019}
Hossein Esfandiari, Mohammad~Taghi Hajiaghayi, Brendan Lucier, and Michael
  Mitzenmacher.
\newblock Online pandora's boxes and bandits.
\newblock In \emph{The Thirty-Third {AAAI} Conference on Artificial
  Intelligence, {AAAI} 2019, The Thirty-First Innovative Applications of
  Artificial Intelligence Conference, {IAAI} 2019, The Ninth {AAAI} Symposium
  on Educational Advances in Artificial Intelligence, {EAAI} 2019, Honolulu,
  Hawaii, USA, January 27 - February 1, 2019}, pages 1885--1892. {AAAI} Press,
  2019.
\newblock \doi{10.1609/aaai.v33i01.33011885}.
\newblock URL \url{https://doi.org/10.1609/aaai.v33i01.33011885}.

\bibitem[Foster and Rakhlin(2020)]{FostRakh2020}
Dylan~J. Foster and Alexander Rakhlin.
\newblock Beyond {UCB:} optimal and efficient contextual bandits with
  regression oracles.
\newblock In \emph{Proceedings of the 37th {ICML} 2020, 13-18 July 2020,
  Virtual Event}, volume 119 of \emph{Proceedings of Machine Learning
  Research}, pages 3199--3210. {PMLR}, 2020.
\newblock URL \url{http://proceedings.mlr.press/v119/foster20a.html}.

\bibitem[Fotakis et~al.(2020)Fotakis, Lianeas, Piliouras, and
  Skoulakis]{FotaLianPiliSkou2020}
Dimitris Fotakis, Thanasis Lianeas, Georgios Piliouras, and Stratis Skoulakis.
\newblock Efficient online learning of optimal rankings: Dimensionality
  reduction via gradient descent.
\newblock In Hugo Larochelle, Marc'Aurelio Ranzato, Raia Hadsell,
  Maria{-}Florina Balcan, and Hsuan{-}Tien Lin, editors, \emph{NeurIPS 33:
  Annual Conference on Neural Information Processing Systems 2020, NeurIPS
  2020, December 6-12, 2020, virtual}, 2020.
\newblock URL
  \url{https://proceedings.neurips.cc/paper/2020/hash/5938b4d054136e5d59ada6ec9c295d7a-Abstract.html}.

\bibitem[Gatmiry et~al.(2024)Gatmiry, Kesselheim, Singla, and
  Wang]{GatmKessSingWang2022}
Khashayar Gatmiry, Thomas Kesselheim, Sahil Singla, and Yifan Wang.
\newblock Learning prophet inequality and pandora's box with limited feedback.
\newblock In \emph{Proceedings of the Twenty-First Annual {ACM-SIAM} Symposium
  on Discrete Algorithms, {SODA} 2024, Alexandria, Virginia, USA, January 7-10,
  2024}, 2024.

\bibitem[Gergatsouli and Tzamos(2022)]{GergTzam2022}
Evangelia Gergatsouli and Christos Tzamos.
\newblock Online learning for min sum set cover and pandora's box.
\newblock In Kamalika Chaudhuri, Stefanie Jegelka, Le~Song, Csaba
  Szepesv{\'{a}}ri, Gang Niu, and Sivan Sabato, editors, \emph{{ICML} 2022,
  17-23 July 2022, Baltimore, Maryland, {USA}}, volume 162 of \emph{Proceedings
  of Machine Learning Research}, pages 7382--7403. {PMLR}, 2022.
\newblock URL \url{https://proceedings.mlr.press/v162/gergatsouli22a.html}.

\bibitem[Goel et~al.(2006)Goel, Guha, and Munagala]{GoelGuhaMuna2006}
Ashish Goel, Sudipto Guha, and Kamesh Munagala.
\newblock Asking the right questions: model-driven optimization using probes.
\newblock In \emph{Proceedings of the Twenty-Fifth {ACM} {SIGACT-SIGMOD-SIGART}
  Symposium on Principles of Database Systems, June 26-28, 2006, Chicago,
  Illinois, {USA}}, pages 203--212, 2006.
\newblock \doi{10.1145/1142351.1142380}.
\newblock URL \url{https://doi.org/10.1145/1142351.1142380}.

\bibitem[Gollapudi and Panigrahi(2019)]{GollPani2019}
Sreenivas Gollapudi and Debmalya Panigrahi.
\newblock Online algorithms for rent-or-buy with expert advice.
\newblock In \emph{Proceedings of the 36th {ICML} 2019, 9-15 June 2019, Long
  Beach, California, {USA}}, pages 2319--2327, 2019.
\newblock URL \url{http://proceedings.mlr.press/v97/gollapudi19a.html}.

\bibitem[Gupta and Kumar(2001)]{GuptKuma2001}
Anupam Gupta and Amit Kumar.
\newblock Sorting and selection with structured costs.
\newblock In \emph{42nd Annual Symposium on Foundations of Computer Science,
  {FOCS} 2001, 14-17 October 2001, Las Vegas, Nevada, {USA}}, pages 416--425,
  2001.
\newblock \doi{10.1109/SFCS.2001.959916}.
\newblock URL \url{https://doi.org/10.1109/SFCS.2001.959916}.

\bibitem[Gupta and Nagarajan(2013)]{GuptNaga2013}
Anupam Gupta and Viswanath Nagarajan.
\newblock A stochastic probing problem with applications.
\newblock In \emph{{IPCO} 2013, Valpara{\'{\i}}so, Chile, March 18-20, 2013.
  Proceedings}, pages 205--216, 2013.
\newblock \doi{10.1007/978-3-642-36694-9\_18}.
\newblock URL \url{https://doi.org/10.1007/978-3-642-36694-9\_18}.

\bibitem[Gupta et~al.(2016)Gupta, Nagarajan, and Singla]{GuptNagaSing2016}
Anupam Gupta, Viswanath Nagarajan, and Sahil Singla.
\newblock Algorithms and adaptivity gaps for stochastic probing.
\newblock In \emph{Proceedings of the Twenty-Seventh Annual {ACM-SIAM}
  Symposium on Discrete Algorithms, {SODA} 2016, Arlington, VA, USA, January
  10-12, 2016}, pages 1731--1747, 2016.
\newblock \doi{10.1137/1.9781611974331.ch120}.
\newblock URL \url{https://doi.org/10.1137/1.9781611974331.ch120}.

\bibitem[Gupta et~al.(2017)Gupta, Nagarajan, and Singla]{GuptNagaSing2017}
Anupam Gupta, Viswanath Nagarajan, and Sahil Singla.
\newblock Adaptivity gaps for stochastic probing: Submodular and {XOS}
  functions.
\newblock In \emph{Proceedings of the Twenty-Eighth Annual {ACM-SIAM} Symposium
  on Discrete Algorithms, {SODA} 2017, Barcelona, Spain, Hotel Porta Fira,
  January 16-19}, pages 1688--1702, 2017.
\newblock \doi{10.1137/1.9781611974782.111}.
\newblock URL \url{https://doi.org/10.1137/1.9781611974782.111}.

\bibitem[Gupta et~al.(2019)Gupta, Jiang, Scully, and Singla]{GuptJianSing2019}
Anupam Gupta, Haotian Jiang, Ziv Scully, and Sahil Singla.
\newblock The markovian price of information.
\newblock In \emph{Integer Programming and Combinatorial Optimization - 20th
  International Conference, {IPCO} 2019, Ann Arbor, MI, USA, May 22-24, 2019,
  Proceedings}, pages 233--246, 2019.
\newblock \doi{10.1007/978-3-030-17953-3\_18}.
\newblock URL \url{https://doi.org/10.1007/978-3-030-17953-3\_18}.

\bibitem[Gupta and Roughgarden(2017)]{GuptRoug2016}
Rishi Gupta and Tim Roughgarden.
\newblock A {PAC} approach to application-specific algorithm selection.
\newblock \emph{{SIAM} J. Comput.}, 46\penalty0 (3):\penalty0 992--1017, 2017.
\newblock \doi{10.1137/15M1050276}.
\newblock URL \url{https://doi.org/10.1137/15M1050276}.

\bibitem[Hazan et~al.(2014)Hazan, Karnin, and Meka]{pmlr-v35-hazan14b}
Elad Hazan, Zohar Karnin, and Raghu Meka.
\newblock Volumetric spanners: an efficient exploration basis for learning.
\newblock In Maria~Florina Balcan, Vitaly Feldman, and Csaba Szepesvári,
  editors, \emph{Proceedings of The 27thCOLT}, volume~35 of \emph{Proceedings
  of Machine Learning Research}, pages 408--422, Barcelona, Spain, 13--15 Jun
  2014. PMLR.

\bibitem[Kearns et~al.(1992)Kearns, Schapire, and Sellie]{Kearns1992TowardEA}
Michael Kearns, Robert~E. Schapire, and Linda Sellie.
\newblock Toward efficient agnostic learning.
\newblock \emph{Machine Learning}, 17:\penalty0 115--141, 1992.

\bibitem[Kleinberg et~al.(2017)Kleinberg, Leyton{-}Brown, and
  Lucier]{KleiLeytLuci2017}
Robert Kleinberg, Kevin Leyton{-}Brown, and Brendan Lucier.
\newblock Efficiency through procrastination: Approximately optimal algorithm
  configuration with runtime guarantees.
\newblock In \emph{Proceedings of {IJCAI} 2017, Melbourne, Australia, August
  19-25, 2017}, pages 2023--2031, 2017.
\newblock \doi{10.24963/ijcai.2017/281}.
\newblock URL \url{https://doi.org/10.24963/ijcai.2017/281}.

\bibitem[Lattanzi et~al.(2020)Lattanzi, Lavastida, Moseley, and
  Vassilvitskii]{LattLavaMoselVass2020}
Silvio Lattanzi, Thomas Lavastida, Benjamin Moseley, and Sergei Vassilvitskii.
\newblock Online scheduling via learned weights.
\newblock In \emph{Proceedings of the 2020 {ACM-SIAM} Symposium on Discrete
  Algorithms, {SODA} 2020, Salt Lake City, UT, USA, January 5-8, 2020}, pages
  1859--1877, 2020.
\newblock \doi{10.1137/1.9781611975994.114}.
\newblock URL \url{https://doi.org/10.1137/1.9781611975994.114}.

\bibitem[Lattimore and Szepesv{\'a}ri(2020)]{lattimore-Bandit}
Tor Lattimore and Csaba Szepesv{\'a}ri.
\newblock \emph{Bandit Algorithms}.
\newblock Cambridge University Press, 2020.

\bibitem[Lavastida et~al.(2021)Lavastida, Moseley, Ravi, and
  Xu]{LavaMoseRaviXu2021}
Thomas Lavastida, Benjamin Moseley, R.~Ravi, and Chenyang Xu.
\newblock Learnable and instance-robust predictions for online matching, flows
  and load balancing.
\newblock In Petra Mutzel, Rasmus Pagh, and Grzegorz Herman, editors,
  \emph{29th {ESA} 2021, September 6-8, 2021, Lisbon, Portugal (Virtual
  Conference)}, volume 204 of \emph{LIPIcs}, pages 59:1--59:17. Schloss
  Dagstuhl - Leibniz-Zentrum f{\"{u}}r Informatik, 2021.
\newblock \doi{10.4230/LIPIcs.ESA.2021.59}.
\newblock URL \url{https://doi.org/10.4230/LIPIcs.ESA.2021.59}.

\bibitem[Lykouris and Vassilvitskii(2018)]{LykoVass2018}
Thodoris Lykouris and Sergei Vassilvitskii.
\newblock Competitive caching with machine learned advice.
\newblock In \emph{Proceedings of the 35th {ICML} 2018, Stockholmsm{\"{a}}ssan,
  Stockholm, Sweden, July 10-15, 2018}, pages 3302--3311, 2018.
\newblock URL \url{http://proceedings.mlr.press/v80/lykouris18a.html}.

\bibitem[Medina and Vassilvitskii(2017)]{MunoVass2017}
Andres~Mu{\~{n}}oz Medina and Sergei Vassilvitskii.
\newblock Revenue optimization with approximate bid predictions.
\newblock In \emph{NeurIPS 2017, 4-9 December 2017, Long Beach, CA, {USA}},
  pages 1858--1866, 2017.
\newblock URL
  \url{http://papers.nips.cc/paper/6782-revenue-optimization-with-approximate-bid-predictions}.

\bibitem[Mitzenmacher(2020)]{Mitz2020}
Michael Mitzenmacher.
\newblock Scheduling with predictions and the price of misprediction.
\newblock In Thomas Vidick, editor, \emph{11th Innovations in Theoretical
  Computer Science Conference, {ITCS} 2020, January 12-14, 2020, Seattle,
  Washington, {USA}}, volume 151 of \emph{LIPIcs}, pages 14:1--14:18. Schloss
  Dagstuhl - Leibniz-Zentrum f{\"{u}}r Informatik, 2020.
\newblock \doi{10.4230/LIPIcs.ITCS.2020.14}.
\newblock URL \url{https://doi.org/10.4230/LIPIcs.ITCS.2020.14}.

\bibitem[Myerson(1981)]{Myer1981}
Roger~B. Myerson.
\newblock Optimal auction design.
\newblock \emph{Math. Oper. Res.}, 6\penalty0 (1):\penalty0 58--73, 1981.
\newblock \doi{10.1287/moor.6.1.58}.
\newblock URL \url{https://doi.org/10.1287/moor.6.1.58}.

\bibitem[Purohit et~al.(2018)Purohit, Svitkina, and Kumar]{PuroSvitKuma2018}
Manish Purohit, Zoya Svitkina, and Ravi Kumar.
\newblock Improving online algorithms via {ML} predictions.
\newblock In \emph{NeurIPS 31: Annual Conference on Neural Information
  Processing Systems 2018, NeurIPS 2018, 3-8 December 2018, Montr{\'{e}}al,
  Canada}, pages 9684--9693, 2018.
\newblock URL
  \url{http://papers.nips.cc/paper/8174-improving-online-algorithms-via-ml-predictions}.

\bibitem[Rohatgi(2020)]{Roha2020}
Dhruv Rohatgi.
\newblock Near-optimal bounds for online caching with machine learned advice.
\newblock In \emph{Proceedings of the 2020 {ACM-SIAM} Symposium on Discrete
  Algorithms, {SODA} 2020, Salt Lake City, UT, USA, January 5-8, 2020}, pages
  1834--1845, 2020.
\newblock \doi{10.1137/1.9781611975994.112}.
\newblock URL \url{https://doi.org/10.1137/1.9781611975994.112}.

\bibitem[Rusmevichientong and Tsitsiklis(2010)]{Rusmevichientong2010LinearlyPB}
Paat Rusmevichientong and John~N. Tsitsiklis.
\newblock Linearly parameterized bandits.
\newblock \emph{Math. Oper. Res.}, 35\penalty0 (2):\penalty0 395–411, may
  2010.
\newblock ISSN 0364-765X.

\bibitem[Seldin et~al.(2014)Seldin, Bartlett, Crammer, and
  Abbasi-Yadkori]{SeldBartCramAbba2014}
Yevgeny Seldin, Peter Bartlett, Koby Crammer, and Yasin Abbasi-Yadkori.
\newblock Prediction with limited advice and multiarmed bandits with paid
  observations.
\newblock In Eric~P. Xing and Tony Jebara, editors, \emph{Proceedings of the
  31st International Conference on Machine Learning}, volume~32 of
  \emph{Proceedings of Machine Learning Research}, pages 280--287, Bejing,
  China, 22--24 Jun 2014. PMLR.
\newblock URL \url{https://proceedings.mlr.press/v32/seldin14.html}.

\bibitem[Shalev{-}Shwartz(2012)]{Shwa2012}
Shai Shalev{-}Shwartz.
\newblock Online learning and online convex optimization.
\newblock \emph{Found. Trends Mach. Learn.}, 4\penalty0 (2):\penalty0 107--194,
  2012.
\newblock \doi{10.1561/2200000018}.
\newblock URL \url{https://doi.org/10.1561/2200000018}.

\bibitem[Shalev-Shwartz and Singer(2007)]{ShalevShwartz2007APP}
Shai Shalev-Shwartz and Yoram Singer.
\newblock A primal-dual perspective of online learning algorithms.
\newblock \emph{Machine Learning}, 69:\penalty0 115--142, 2007.

\bibitem[Singla(2018)]{Sing2018}
Sahil Singla.
\newblock The price of information in combinatorial optimization.
\newblock In \emph{Proceedings of the Twenty-Ninth Annual {ACM-SIAM} Symposium
  on Discrete Algorithms, {SODA} 2018, New Orleans, LA, USA, January 7-10,
  2018}, pages 2523--2532, 2018.
\newblock \doi{10.1137/1.9781611975031.161}.
\newblock URL \url{https://doi.org/10.1137/1.9781611975031.161}.

\bibitem[Valko et~al.(2014)Valko, Munos, Kveton, and
  Koc{\'a}k]{Valko2014SpectralBF}
Michal Valko, R{\'e}mi Munos, Branislav Kveton, and Tom{\'a}s Koc{\'a}k.
\newblock Spectral bandits for smooth graph functions.
\newblock In \emph{ICML}, 2014.

\bibitem[van~der Hoeven et~al.(2018)van~der Hoeven, van Erven, and
  Kot{\l}owski]{pmlr-v75-hoeven18a}
Dirk van~der Hoeven, Tim van Erven, and Wojciech Kot{\l}owski.
\newblock The many faces of exponential weights in online learning.
\newblock In Sébastien Bubeck, Vianney Perchet, and Philippe Rigollet,
  editors, \emph{Proceedings of the 31st COLT}, volume~75 of \emph{Proceedings
  of Machine Learning Research}, pages 2067--2092. PMLR, 06--09 Jul 2018.

\bibitem[Wang et~al.(2020)Wang, Li, and Wang]{WangLiWang2020}
Shufan Wang, Jian Li, and Shiqiang Wang.
\newblock Online algorithms for multi-shop ski rental with machine learned
  advice.
\newblock In Hugo Larochelle, Marc'Aurelio Ranzato, Raia Hadsell,
  Maria{-}Florina Balcan, and Hsuan{-}Tien Lin, editors, \emph{NeurIPS 2020,
  December 6-12, 2020, virtual}, 2020.
\newblock URL
  \url{https://proceedings.neurips.cc/paper/2020/hash/5cc4bb753030a3d804351b2dfec0d8b5-Abstract.html}.

\bibitem[Wei(2020)]{Wei2020}
Alexander Wei.
\newblock Better and simpler learning-augmented online caching.
\newblock In Jaroslaw Byrka and Raghu Meka, editors, \emph{Approximation,
  Randomization, and Combinatorial Optimization. Algorithms and Techniques,
  {APPROX/RANDOM} 2020, August 17-19, 2020, Virtual Conference}, volume 176 of
  \emph{LIPIcs}, pages 60:1--60:17. Schloss Dagstuhl - Leibniz-Zentrum
  f{\"{u}}r Informatik, 2020.

\bibitem[Weisz et~al.(2018)Weisz, Gyorgy, and Szepesvari]{WeisGyorSzep2018}
Gellert Weisz, Andras Gyorgy, and Csaba Szepesvari.
\newblock Leapsandbounds: A method for approximately optimal algorithm
  configuration.
\newblock In \emph{International Conference on Machine Learning}, pages
  5257--5265, 2018.

\bibitem[Weitzman(1979)]{Weit1979}
Martin~L Weitzman.
\newblock {Optimal Search for the Best Alternative}.
\newblock \emph{Econometrica}, 47\penalty0 (3):\penalty0 641--654, May 1979.

\end{thebibliography}
\bibliographystyle{plainnat}

\newpage
\onecolumn

\appendix

\section{Appendix}
\subsection{Experiments} \label{sec:experiments}
We simulate the $\mathcal{CPB}$ algorithm on synthetic data, where the box values implement uniform distributions on  different intervals at each round. We simulate $n=10$ boxes with identical costs $c_i=1$ for all $i\in \boxes$, for $T=300$ rounds. The value distributions of the boxes are generated as follows:
\begin{itemize}
    \item Each box $i$ corresponds to a fixed random vector $\bm{w}^*_i\in \mathbb{R}^d$ with $d=5$, selected uniformly at the beginning of the simulation, such that $||\bm{w}^*_i||_2\leq M = 4$.
    \item At each round $t\in [T]$ and for each box $i\in\boxes$, a context $\bm{x}_{t,i}$ is uniformly drawn in $\mathbb{R}^d$ such that $||\bm{x}_{t,i}||_2\leq 1$.    
    \item The value $v_{i,t}$ of each box $i$ at time $t$ is drawn from a uniform distribution on the interval $[0,B_{t,i}]$. The right-bound $B_{t,i}$ is computed such that the reservation value $\sigma_{t,i}^*$ of the uniform distribution satisfies the realizability assumption, that is $\sigma_{t,i}^*= \bm{x}_{t,i}^T \bm{w}_i^* $.
\end{itemize} 
We implement the $\mathcal{CPB}$ algorithm for the estimation of vector $\bm{w}_{i}^*$ of each box from the observations $(\bm{x}_{t,i}^T,v_{i,t})_{t\in[T]}$ using the FTRL oracle. We compare its performance with the performance of linear regression applied to observations $(\bm{x}_{t,i}^T,v_{i,t})_{t\in[T]}$. The performance of the two methods (averaged over $20$ repetitions) is depicted in \cref{fig:plot}, along with the error bars.

\begin{figure*}[h]
    \centering
    \begin{minipage}{.44\textwidth}
        \includegraphics[width=0.9\linewidth]{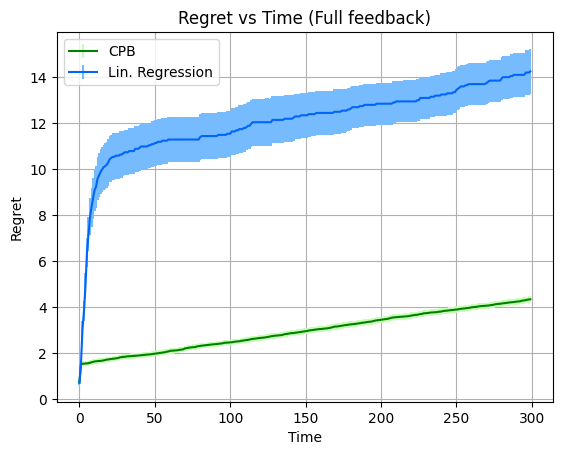}
    \end{minipage}
    \begin{minipage}{.44\textwidth}
        \includegraphics[width=0.9\linewidth]{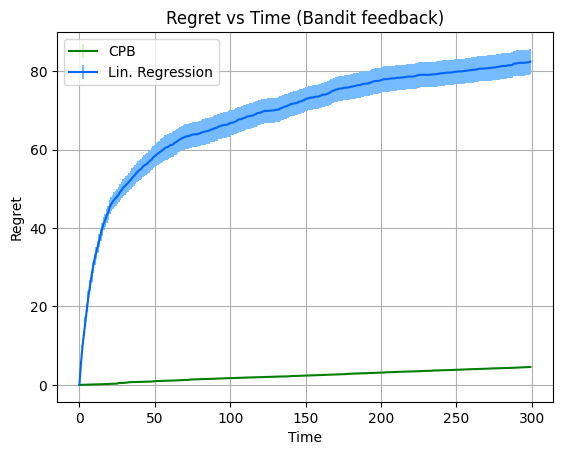}
    \end{minipage}
    \caption{The regret as a function of time in a) Full feedback setting, and b) Bandit feedback setting.} 
    \label{fig:plot}
\end{figure*}

In our first experiment, full feedback is available, i.e. the samples $(\bm{x}_{t,i}^T,v_{i,t})$ are available  to both algorithms for all $i\in\boxes,t\in[T]$. In the second experiment the algorithms have bandit feedback. That is, a sample $(\bm{x}_{t,i}^T,v_{i,t})$ is available for an algorithm only if box $i$ was opened at time $t$ by the algorithm. 
We plot the regret (defined in~\eqref{eq:regret}) of the algorithms as a function of time $t$. As expected, the regret of both algorithms is smaller under full feedback, since there is more information available at each round. In addition, the regret increases with sublinear rate as $t$ grows, which is compatible with our theoretical guarantees in both the full-feedback and the bandit setting. Finally, in both settings, using the $\mathcal{CPB}$ algorithm leads to significantly smaller regret compared to linear regression.

\subsection{Impossibility beyond Realizability}\label{apn:hardness}

We now provide an example to indicate that without the realizability assumption, our setting is not only computationally hard (even in the offline case) \citep{Kearns1992TowardEA}, but also becomes information-theoretically hard in the online case:

Consider the simple setting of two boxes, with costs either $0$ or $1$. The context provides us with some information on where is the $0$ in each round, and our objective it to select $0$ the maximum amount of times. 
Clearly, in the offline case the above setting coincides with that of agnostic learning, which is known to be computationally hard, even for linear functions (e.g., linear classification).

In the online setting, where an adversary can choose the context to give us at each round, the problem is information-theoretically hard. Similarly, assume that a context is a real number in $[0,1]$ and that there exists a threshold $x$, which decides which box gives $0$ and which $1$. The adversary can always give us a threshold in the uncertainty region, and force us to make a mistake at every round, thus accumulating linear cost over time, while the cost accumulated by the optimal algorithm (which knows the context-value relation) is 0.





\subsection{Proofs from Section~\ref{sec:robust}}\label{apn:robust}

\propCost*
\begin{proof}

Assume we are using Weitzman's Algorithm with reservation values 
$\bm \s$ in an instance with optimal reservation values $\bm\s^*$ and costs $\bm c$. Let $\mathcal{P}$ be the set of boxes opened during the algorithm's run.
Then, the \pb{} cost incurred at the end of the run can be bounded as follows:
\begin{align*}
\weitz_{\bm \dist}(\bm{\s}; \bm c) &= \E{\bm v\sim \bm \dist}{\min_{i\in {\mathcal{P}}} v_i + \sum_{i\in \mathcal{P}} c_i} &\\
& = \E{\bm v\sim \bm \dist}{\min_{i\in {\mathcal{P}}} v_i + \sum_{i\in \mathcal{P}} c_i + c_{\dist_i}(\s_i) - c_{\dist_i}(\s_i)} &\\
& =  \weitz_{\bm \dist}(\bm\s ; \bm c_{\bm \dist}(\bm \s)) + \E{\bm v\sim \bm \dist}{\sum_{i\in {\mathcal{P}}}(c_i - c_{\dist_i}(\s_i))}&\\
&\leq  \weitz_{\bm \dist}({\bm\s}^* ; \bm c) +\\ &\qquad \E{\bm v\sim \bm \dist}{\sum_{i\in B} \relu(c_{\dist_i}(\s_i) - c_i) +\sum_{i\in {\mathcal{P}}}(c_i - c_{\dist_i}(\s_i))}  & \\
& \leq \weitz_{\bm \dist}({\bm\s}^* ; \bm c) + \E{\bm v\sim \bm \dist}{\sum_{i\in \boxes} \lp|c_i-c_{\dist_i}(\s_i)\rp| },&
\end{align*}
where in the first inequality we used Lemma~\ref{lem:opt_hat} and in the last inequality that every box from each of the sets $\boxes$ and $\mathcal{P}$ can contribute at most once in the sum, because of the ReLU function.
\end{proof}

\lemOptHat*
\begin{proof}
Consider the optimal strategy $\weitz_{\bdist}(\bm{\s}^*; \bm{c})$ for the instance with reservation values $\bm{\s}^*$, and assume we use the same strategy in the instance where the `actual reservation values are $\hat{\bm{\s}}$. That means our algorithm $\weitz_{\bm{\dist}}(\bm{\s}^*;\bm{c}_{\bm \dist}(\bhs))$ orders the boxes according to the reservation values $\bm{\s}^*$ and stops when $\min_i v_i \leq \s^*_{i+1}$. Denote by $\mathcal{P}$ the set of boxes probed (opened) by the algorithm $\weitz_{\bdist}(\bm{\s}^*;\bm{c}_{\bdist}(\bhs))$, then the cost is
\begin{align*}
\weitz_{\bdist}(\bhs;\bm{c}_{\bdist}(\bhs)) & \leq\weitz_{\bdist}(\bm{\s}^*;\bm{c}_{\bm \dist}(\bhs)) \\
    & = \E{\bm v\sim \bm D}{\min_{i\in \mathcal{P}} v_i + \sum_{i\in \mathcal{P}}c_{\dist_i}(\hs_i)} \\
    & =\E{\bm v\sim \bm D}{\min_{i\in \mathcal{P}} v_i} + \E{\bm{v} \sim \bm{D}}{\sum_{i\in \mathcal{P}}c_{\dist_i}(\hs_i) - c_{\dist_i}(\s^*_i) + c_{\dist_i}(\s^*_i)} \\
    & = \weitz_{\bdist}(\bm{\s}^*; \bm{c})+ \sum_{i\in \mathcal{P}} (c_{\dist_i}(\hs_i) - c_{\dist_i}(\s^*_i))\\
    & \leq \weitz_{\bdist}(\bm{\s}^*; \bm{c}) + \sum_{i\in \mathcal{P}} \relu(c_{\dist_i}(\hs_i) - c_{\dist_i}(\s^*_i)),
\end{align*}
where the first inequality follows by the definition of the optimal, and the first equality is the actual cost of our algorithm, since in our instance we pay $c_{\dist_i}(\hs_i)$ for each box. 
\end{proof}

\subsection{Proofs from \cref{sec:concluding_full_info_th}}\label{apn:main_thm}

\lemmaCtoF*

\begin{proof}

Recall from Definition~\ref{def:online_regression} that
\begin{align*}
H_c(\s - v) =  \frac{1}{2} \relu(\s - v)^2 - c (\s - v) = \left\{\begin{array}{lr}
        \frac{1}{2} (\s - v)^2 - c (\s - v) & \text{if } \s \geq v \\
        - c (\s - v), & \text{if }\s < v 
        \end{array}\right\}
\end{align*}
and 
$$ H'_c(\s - v) = \relu(\s-v) - c.$$ 
We also use that for $\s=\s^*$ using Weitzman's theorem for optimal reservation values (Theorem~\ref{thm:weitzman}) we have 
\[ 
\E{v}{H'_c(\s^*-v)}=0.
\]

We need to compare $H_c(\s - v)-H_c(\s^* - v)$ to  $|c_D(\s)-c_D(\s^*)|$. 
Using that $\int_{a}^b f'(x)\,dx = f(b) - f(a)$ and changing the order between expectation and integration, we obtain
\begin{align}\label{eq:int_1}
    \E{v \sim D}{H_c(\s - v)-H_c(\s^* - v)} &=  \E{v \sim D}{\int_{\s^*}^{\s}H'_c(\s' - v)\,d\s'} \nonumber\\
    &= \int_{\s^*}^{\s} \E{v\sim D}{H'_c(\s' - v)}\,d\s' \nonumber\\
    &= \int_{\s^*}^{\s} \E{v\sim D}{H'_c(\s' - v)} - \E{v\sim D}{H'_c(\s^* - v)}\,d\s' \nonumber\\
    &= \int_{\s^*}^{\s} \E{v\sim D}{\relu(\s'-v)} - \E{v\sim D}{\relu(\s^*-v)}\,d\s' \nonumber\\
    &= \int_{\s^*}^{\s} (c_D(\s') - c_D(\s^*)) \,d\s',
\end{align}
where the last inequality above is by definition of $c_D(\cdot)$. We now distinguish between the following cases for the reservation value $\s$ compared to the optimal reservation value $\s^*$.

\paragraph{Case $\s\geq \s^*$:} 
observe that $c_D$ is 1-Lipschitz, therefore we have that $|c_D(\s) - c_D(\s')| < |\s - \s'| $.
Moreover, when $\s' \ge \s^*$ we have $c_D(\s' ) \ge c_D(\s^*)$. Thus \cref{eq:int_1} becomes:
\begin{align*}
    \E{v \sim D}{H_c(\s - v)-H_c(\s^* - v)} &= \int_{\s^*}^{\s} (c_D(\s') - c_D(\s^*)) \,d\s' \\ 
    &\ge \int_{\s - (c_D(\s) - c_D(\s^*))}^{\s} (c_D(\s') - c_D(\s^*)) \,d\s' \\
    &\ge \int_{\s - (c_D(\s) - c_D(\s^*)) }^{\s} (c_D(\s) - c_D(\s^*) - (\s - \s') ) \,d\s' \\
    &= \frac 1 2 (c_D(\s) - c_D(\s^*))^2,
\end{align*}
where for the first inequality we used that $ \s - \s^* \geq c_D(\s) - c_D(\s^*) $ and for the second that $\s - \s' \geq c_D(\s) - c_D(\s') $.

\paragraph{Case $\s<\s^*$:} 
as before, \cref{eq:int_1} can be written as
\begin{align*}
    \E{y \sim D}{H_c(\s - v)-H_c(\s^* - v)} &= \int_{\s^*}^{\s} (c_D(\s') - c_D(\s^*)) \,d\s' \\ 
    &= - \int_{\s}^{\s^*} (c_D(\s') - c_D(\s^*)) \,d\s' \\ 
    &\ge -\int_{\s}^{\s - (c_D(\s) - c_D(\s^*))} (c_D(\s') - c_D(\s^*)) \,d\s' \\
    &\ge \int_{\s}^{\s - (c_D(\s) - c_D(\s^*)) } (-c_D(\s) + c_D(\s^*) - (\s' - \s) ) \,d\s' \\
    &= \frac 1 2 (c_D(\s) - c_D(\s^*))^2,
\end{align*}
where for the first inequality we used that $ \s^* - \s \geq c_D(\s^*) - c_D(\s) $ and the fact that $c_D(\s') - c_D(\s^*)\leq 0$. For the second we used that $- c_D(\s') \geq -c_D(\s) -(\s' -\s)  $.

\end{proof}

\subsection{Regression via FTRL and Proofs of \cref{sec:full_info}}\label{apn:FTRL}

Recall our regression function defined in equation~\eqref{eq:convex_proxy}. We define $f_t: \mathbb{R}^d \rightarrow \mathbb{R}$, as the loss function at time $t\in [T]$ as 
\[
 f_t(\bm{w}) = H_c(\bm{w}^T \bm{x}_{t}-y_t) = \frac{1}{2} (\bm{w}^T \bm{x}_{t} - y_t)_+^2 - c (\bm{w}^T \bm{x}_{t} - y_t). 
\]

As these functions are convex, we can treat the problem as an online convex optimization. Specifically, the problem we solve is the following.
\begin{enumerate}
    \item At every round $t$, we pick a vector $\bm{w}_{t}\in \mathbb{R}^d$.
    \item An adversary picks a convex function $f_t: \mathbb{R}^d \rightarrow \mathbb{R}$ induced by the input-output pair $(\bm x_t, y_t)$ and we incur loss $f_{t}(\sigma_t)$.
    \item At the end of the round, we observe the function $f_{t}$.
\end{enumerate}

In order to solve this problem, we use a family of algorithms called Follow The Regularized Leader (FTRL). In these algorithms, at every step $t$ we pick the solution $\bm{w}_t$ that would have performed best so far while adding a regularization term $U(\bm {w}_t)$ for stability reasons. That is, we choose $$\bm w_t = \arg \min_{\bm w : \|w\|_2 \le M} \sum_{\tau=1}^{t-1} f_t(\bm w) + U(\bm w).$$ The guarantees for these algorithms are presented in the following lemma. 

\begin{lemma}[Theorem 2.11 from \cite{Shwa2012}]
\label{lem:FTRL}
	Let $f_1,\ldots ,f_T$ be a sequence of convex functions such that each $f_t$ is $L$-Lipschitz with respect to some norm.  Assume that FTRL is run on the sequence with a regularization function $U$ which is $\eta$-strongly-convex with respect to the same norm. Then, for all $u\in C$ we have that $\text{Regret}(\text{FTRL}, T) \cdot T \leq U_{\max} - U_{\min} + TL^2 \eta$
\end{lemma}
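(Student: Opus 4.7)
The plan is to prove this classical FTRL regret bound by combining a ``Be-the-Leader'' telescoping argument with a one-step stability estimate driven by strong convexity of the regularizer. First I would define the regularized cumulative loss $F_t(\bm w) = \sum_{\tau=1}^{t-1} f_\tau(\bm w) + U(\bm w)$ so that the FTRL iterate is $\bm w_t = \arg\min_{\bm w \in C} F_t(\bm w)$. Treating the regularizer $U$ as a phantom zeroth-round loss and applying induction on $T$ (the standard Be-the-Leader lemma) yields, for any competitor $\bm u \in C$,
\[
\sum_{t=1}^T f_t(\bm w_{t+1}) + U(\bm w_1) \;\le\; \sum_{t=1}^T f_t(\bm u) + U(\bm u).
\]
Rearranging and adding and subtracting $\sum_t f_t(\bm w_t)$ gives the decomposition
\[
\sum_{t=1}^T f_t(\bm w_t) - \sum_{t=1}^T f_t(\bm u) \;\le\; \bigl(U(\bm u) - U(\bm w_1)\bigr) + \sum_{t=1}^T \bigl(f_t(\bm w_t) - f_t(\bm w_{t+1})\bigr).
\]
The first term is bounded by $U_{\max} - U_{\min}$, so it remains to control the per-round stability gap $f_t(\bm w_t) - f_t(\bm w_{t+1})$.

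Next I would invoke the two hypotheses to bound stability. By $L$-Lipschitzness, $f_t(\bm w_t) - f_t(\bm w_{t+1}) \le L\,\|\bm w_t - \bm w_{t+1}\|$. Since each $f_\tau$ is convex and $U$ is $\eta$-strongly convex in the same norm, the objectives $F_t$ and $F_{t+1}$ are both $\eta$-strongly convex. Using the minimizer property $F_t(\bm w_t) \le F_t(\bm w_{t+1})$ and the identity $F_{t+1} - F_t = f_t$, we get $F_{t+1}(\bm w_t) - F_{t+1}(\bm w_{t+1}) \le f_t(\bm w_t) - f_t(\bm w_{t+1}) \le L\,\|\bm w_t - \bm w_{t+1}\|$. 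Strong convexity at the minimizer $\bm w_{t+1}$ then lower-bounds the same quantity by $\tfrac{\eta}{2}\|\bm w_t - \bm w_{t+1}\|^2$, so $\|\bm w_t - \bm w_{t+1}\| \le 2L/\eta$ and thus $f_t(\bm w_t) - f_t(\bm w_{t+1}) \le 2L^2/\eta$ (the constants can be tightened to the textbook $L^2/\eta$ via the sharper Fenchel-conjugate / Bregman-divergence argument).

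Summing over $t$ yields $\text{Regret}(\text{FTRL}, T) \le (U_{\max} - U_{\min}) + T L^2 / \eta$, which matches the stated bound up to normalization conventions for the two sides of the inequality (the extra $T$ on the left and the placement of $\eta$ versus $1/\eta$ as written correspond to treating ``Regret'' as averaged per round and absorbing $\eta$ into the parameterization of $U$).

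The main obstacle is exactly the minimizer-stability step, since everything else is bookkeeping via telescoping and inductive Be-the-Leader. The cleanest way to discharge it is the Bregman-divergence route: define $\Phi_t(\bm \theta) = \max_{\bm w \in C}\bigl(\langle \bm \theta, \bm w\rangle - F_t(\bm w)\bigr)$; strong convexity of $F_t$ gives $1/\eta$-smoothness of $\Phi_t$, and the per-step regret is bounded by the Bregman divergence of $\Phi_t$ between two consecutive cumulative gradients, which is at most $L^2/(2\eta)$ by smoothness together with the Lipschitz bound on $f_t$. I would choose whichever of these two routes (direct strong-convexity argument or Fenchel-dual smoothness) is cleanest for the intended regularizer, since both produce the same asymptotic $O(L^2 T/\eta)$ stability cost and the advertised regret bound follows immediately.
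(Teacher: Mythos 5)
The paper does not actually prove this lemma---it imports it verbatim as Theorem 2.11 of \cite{Shwa2012}---so there is no in-paper proof to compare against. Your argument is the standard one behind that cited theorem (Be-the-Leader telescoping plus a strong-convexity stability estimate) and is correct; the factor-$2$ slack in your stability step, which you already flag, is removed by applying the strong-convexity inequality at \emph{both} minimizers $\bm w_t$ and $\bm w_{t+1}$ and adding, which yields $f_t(\bm w_t)-f_t(\bm w_{t+1})\ge \eta\|\bm w_t-\bm w_{t+1}\|^2$ and hence $\|\bm w_t-\bm w_{t+1}\|\le L/\eta$. You are also right to read the printed bound $TL^2\eta$ (with $U$ ``$\eta$-strongly convex'') under the paper's implicit convention, used later with $U=\frac{1}{2\eta}\|\cdot\|_2^2$, that the actual strong-convexity modulus is $1/\eta$; the stray factor $T$ on the left-hand side is likewise a typo that the paper itself silently drops when it invokes the lemma.
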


Observe that in order to achieve the guarantees of \cref{lem:FTRL}, we need the functions to have some convexity and Lipschitzness properties, for which we show the following lemma. 

\begin{lemma}\label{lem:convexity}
   The function $f_t(\bm w) = H_c(\bm w^T \bm x_t - y_t)$ is convex and $\max\{M,c\}$-Lipschitz when $\|x_t\|_2 \le 1$ and $\|\bm w\|_2 \le M$.
\end{lemma}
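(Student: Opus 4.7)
The plan is to handle convexity and the Lipschitz bound separately, both via direct calculation on the one-dimensional function $H_c$. For convexity, I would observe that $f_t$ is $H_c$ composed with the affine map $\bm w \mapsto \bm w^T \bm x_t - y_t$, so it suffices to show $H_c$ itself is convex. Writing $H_c(z) = \frac{1}{2}\relu(z)^2 - c z$, the linear term $-cz$ is convex, and $\frac{1}{2}\relu(z)^2$ equals $0$ for $z \le 0$ and $\frac{1}{2}z^2$ for $z > 0$; its derivative is exactly $\relu(z)$, which is continuous and non-decreasing, so $H_c$ is convex, and hence so is $f_t$.

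For the Lipschitz bound, I would compute the gradient of $f_t$: by the chain rule, $\nabla f_t(\bm w) = (\relu(\bm w^T \bm x_t - y_t) - c)\, \bm x_t$, and using $\|\bm x_t\|_2 \le 1$ gives $\|\nabla f_t(\bm w)\|_2 \le |\relu(\bm w^T \bm x_t - y_t) - c|$. The task then reduces to bounding this scalar by $\max\{M, c\}$. The key observation is that the box values $y_t$ are non-negative, so Cauchy--Schwarz yields $\bm w^T \bm x_t - y_t \le \bm w^T \bm x_t \le \|\bm w\|_2 \|\bm x_t\|_2 \le M$, which places $\relu(\bm w^T \bm x_t - y_t)$ in $[0, M]$. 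Both $\relu(\cdot)$ and $c$ then lie in $[0, \max\{M, c\}]$, so their difference has absolute value at most $\max\{M, c\}$, completing the Lipschitz estimate.

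The only subtlety worth flagging is the implicit use of $y_t \ge 0$. This is consistent with the reduction of \Cref{sec:online_regression}, where the regression targets $y_t$ are the realized box values $v_{t,i}$, modelled as non-negative throughout (otherwise the \pb{} cost is not even bounded from below). I do not expect any further obstacle: convexity is essentially a one-line check, and the Lipschitz constant drops out of a single scalar inequality once the sign of $y_t$ is used.
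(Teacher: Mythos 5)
Your proof is correct and follows essentially the same route as the paper's: convexity via the composition of the convex scalar function $H_c$ with an affine map, and the Lipschitz constant via the gradient bound $\|\nabla f_t(\bm w)\|_2 \le |\relu(\bm w^T \bm x_t - y_t) - c| \le \max\{M,c\}$. Your explicit flagging of the $y_t \ge 0$ assumption is a small improvement in rigor over the paper, which uses the same fact implicitly when asserting that $\bm w^T \bm x_t - y_t \le M$.
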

\begin{proof}[Proof of Lemma~\ref{lem:convexity}]
We first show convexity and Lipschitzness for the function $H_c(z) = \frac 1 2 \relu(z)^2 - cz$  for  $z \in \reals$.

Consider the derivative $H'_c(z) =  \relu(z) - c$ and the second derivative $H''_c(z) =  \textbf{1}(z \ge 0)$ and notice that the second derivative is always non-negative which implies convexity.

To bound the Lipschitzness of $H_c$, we consider the maximum absolute value of the derivative which is at most $\max \{c, \relu(z)\}$.

We now turn our attention to $f_t$ and notice that $f_t$ is convex as a composition of a convex function with a linear function. To show Lipschitzness we must bound the norm of the gradient of $f_t$ which is:
$$\| \nabla f_t(\bm w) \|_2 = | H'_c(\bm w^T \bm x_t - y_t) | \| \bm x \|_2 \le \max \{c, M\},$$
where the last inequality follows as the maximum value of $\bm w^T \bm x_t - y_t$ is at most $M$.
\end{proof}
\thmFTRLfullinfo*
\begin{proof}
We use the following regularizer $U(\bm{w}) = \frac{1}{2\eta} \norm{\bm{w}}{2}^2$, and observe that this is $\eta$-strongly convex with $U_{\min} = 0$ and $U_{\max} = M/(2\eta)$.
From \cref{lem:convexity} we have that the loss function is convex and $\max \{c, M\}$-Lipschitz, therefore using \cref{lem:FTRL} with $\eta = \frac {\sqrt{M}} {\max \{c, M\} \sqrt{2T}}$ we get that $\text{Regret} (FTRL,T) \leq \frac M {2 \eta} + \eta (\max \{c, M\})^2 T = \max \{c, M\} \sqrt{ 2 M T }$.
\end{proof}

\corFTRLFull*
\begin{proof}
The regret of \cpb~ can be upper bounded as follows:
    \begin{align*}
    \text{\normalfont Regret}(\mathcal{CPB}, T) &\leq 2 n \sqrt{ Tr(T) } &\text{by \Cref{thm:main}}\\
    &\leq 3 n \sqrt{\max\{M,c\}} M^{1/4} T^{\frac{3}{4}}.  &\text{by Lemma \ref{thm:ftrl_full_info}}
    \end{align*}
\end{proof}

\subsection{Proofs from Section~\ref{sec:bandit}}\label{apn:bandit}

In this section we show how to obtain the guarantees of FTRL in the case of costly feedback proving Lemma~\ref{lem:full_to_bandit_oracle}.

While the proof is standalone, the analysis uses ideas from \cite{GergTzam2022} (in particular Lemma~4.2 and Algorithm~2). We give a simplified presentation for the case of online regression with improved bounds.

\lemFullBandOracle* 
 
\begin{proof}
 Recall that in the online regression problem, we obtain loss $f_t$ at any time step where $f_t(\bm{w}) = H_{c}(\bm{w}^T \bm{x}_t - y_t) =  \frac{1}{2} \relu(\bm{w}^T \bm{x}_t - y_t)^2 - c (\bm{w}^T \bm{x}_t - y_t)$.
 
 To analyze the regret for the costly feedback setting, we consider the regret of two related settings for a full-information online learner but with smaller number of time steps $T/k$.
\begin{enumerate}
    \item \textbf{Average costs setting}: the learner observes at each round $\tau$ a single function $$\overline{f}_{\tau} = \frac 1 k \sum_{t \in \mathcal{I}_\tau} f_{t},$$ which is the average of the $k$ functions in the corresponding interval  $\mathcal{I}_\tau$.
    \item \textbf{Random costs setting}: the learner observes at each round $\tau$ a single function $$f^r_\tau = f_{t_p} \quad \text{with} \quad t_p \sim \text{Uniform}(\mathcal{I}_\tau)$$ sampled uniformly among the $k$ functions
    $f_t$ for $t \in \mathcal{I}_\tau$.
 \end{enumerate}
 
 The guarantee of the full information oracle implies that for the random costs setting we obtain regret $r(T/k)$. That is, the oracle chooses a sequence $\bm{w}_1,\ldots,\bm{w}_{T/k}$ such that
$$ \sum_{\tau =1}^{T/k}
	f^r_\tau (\bm{w}_\tau) \le \min_{\bm w} \sum_{\tau =1}^{T/k}
	f^r_\tau (\bm{w}) + r(T/k).
$$

 Denote by $\bm w^* = \text{argmin}_{\bm{w}\in \mathbb{R}^d} \sum_{\tau =1}^{T/k}
	\overline{f}_\tau (\bm{w})$ be the minimizer of the $\overline{f}_t$ over the $T/k$ rounds. Note that this is also the minimizer of 
    $\sum_{t =1}^{T}
	{f}_t (\bm{w})$. From the above regret guarantee, we get that
	
$$ \sum_{\tau =1}^{T/k}
	( f^r_\tau (\bm{w}_\tau)  - f^r_\tau (\bm{w}^*)  ) \le  r(T/k).
$$ 
Since $\E{}{f^r_\tau(\bm{z}_\tau) } = \overline{f}_\tau(\bm{z}_\tau)$ when the expectation is over the choice of $t_p \sim \text{Uniform}(\mathcal{I}_\tau)$. Taking expectation in the above, we get that
$$ \sum_{\tau =1}^{T/k}
	( \overline{f}_\tau (\bm{w}_\tau)  - \overline{f}_\tau (\bm{w}^*)  ) \le  r(T/k),
$$ 
which implies that the regret in the average costs setting is also $r(T/k)$.

We now obtain the final result by noticing that the regret of the algorithm is at most $k$ times the regret of the average costs setting and incurs an additional overhead of $\frac T k c$ for the information acquisition cost in the $T/k$ rounds where the input-output pairs are queried. Thus, overall the regret is bounded by $k r(T/k) + c T / k$.
\end{proof}

\thmFTRLbandit*
\begin{proof}
Initially observe that by combining \cref{thm:ftrl_full_info} and \cref{lem:full_to_bandit_oracle} we get that there exists an oracle for online regression with loss $H_c$ under the costly feedback setting that guarantees 
\begin{align*}
\text{Regret}(\mathcal{O}_{\text{costly}}, T) &\leq L \sqrt{2kMT} + c T /k
\end{align*}
with $L =\max\{M,c\}$.
Setting $k= \left( \frac {T c^2}  {2 M L^2} \right)^{1/3}$, we obtain regret $(2 c M L^2)^{1/3} T^{2/3}$. Further combining this with Theorem~\ref{thm:main} that connects the regret guarantees of regression with the regret for our \cpb{} algorithm, we obtain regret $2 n (2 c_{\max} M \max\{M,c_{\max} \}^2)^{1/6} T^{5/6}$.
\end{proof}

\subsection{Discussion on the Lower Bounds}
 In this work our goal was to formulate the online contextual extension of the Pandora’s Box problem and design no-regret algorithms for this problem, and therefore we left the lower bounds as a future work direction. We are however including here a brief discussion on lower bounds implied by previous work.

In \citet{GatmKessSingWang2022} the authors study a special case of our setting where the value distributions and contexts are fixed at every round and for each alternative. The results of \cite{GatmKessSingWang2022} imply a lower bound for this very special case of our problem, which however does not correspond to an adversarial setting, as our general formulation does. Notice, also, that a $\sqrt{T}$ lower bound for our problem can be directly obtained by the fact that our setting generalizes the stochastic multi-armed bandit problem. Observe that if all costs are chosen to be identical and large enough, then both the player and the optimal solution must select exactly one alternative per round (and their inspection costs cancel out in the regret). Interestingly, in that case Weitzman’s algorithm indeed selects the alternative of the smallest mean reward.

In another related setting, multi-armed bandits with paid observations, \citet{SeldBartCramAbba2014} show a $T^{2/3}$
lower bound on the regret in the adversarial case. Although their setting is different (e.g. in our setting multiple actions are allowed at each round and there is contextual information involved) we believe that it has similarities to ours in terms of costly options and information acquisition. Therefore, we believe that tighter lower bounds could hold for our general adversarial problem.

\subsection{A primer on Pandora's Box}
We include here a more detailed discussion on Pandora's Box problem for anyone unfamiliar with the setting. 

This problem was first formulated by Weitzman~\cite{Weit1979} in the following form; there are $n$ boxes, each box $i$ has a deterministic opening cost $c_i\geq 0$ and a value that follows a known distribution $\mathcal{D}_i$. The player can observe the value instantiated by the distribution of a box after paying the opening cost. The goal is to select boxes to open, and a value to keep so as to maximize 

\[
\E{\mathcal{P}, \mathcal{D}}{\max_{i\in \mathcal{P}} v_i + \sum_{i\in \mathcal{P}}c_i}
\]

In the original problem, the distributions $\mathcal{D}_i$ are independent, and the optimal algorithm is given by Algorithm~\ref{algo:optimal_maximization}. In the algorithm Weitzman used the reservation value for each box $i$ which is a number calculated as the solution to the equation $\E{\mathcal{D_i}}{(v-\s_i)^+} = c_i$, and as shown in \cite{Weit1979}, this is enough to produce an optimal solution through Algorithm~\ref{algo:optimal_maximization}. In our work we tackle the minimization version of the problem (similarly to \cite{ChawGergTengTzamZhan2020, GergTzam2022, ChawGergMcmaTzam2021}), described in more detail in Section~\ref{sec:prelims}. For the independent case the same results hold regardless of the version and in Algorithm~\ref{algo:optimal_maximization} we highlighted the differences with Algorithm~\ref{algo:weitz_prelim} to show the changes required for the maximization.

\begin{algorithm}
   \caption{Weitzman's algorithm.}\label{algo:optimal_maximization}
 \textbf{Input:} $n$ boxes, costs $c_i$, reservation values $\s_i$ for $i\in \boxes$\\
\begin{algorithmic}[1]
	\STATE $\pi \leftarrow $ sort boxes by \textbf{decreasing} $\s_i$\\
	\STATE $v_{\textbf{max}} \leftarrow 0$\\
	\FOR{every box $\pi_i$ } 
			\STATE Pay $c_i$ and open box $\pi_i$ and observe $v_i$\\
			\STATE $v_{\textbf{max}} \leftarrow \textbf{max}(v_{\textbf{max}}, v_i)$\\
			\IF{$v_{\textbf{max}} < {\s}_{\pi_{i+1}}$}
				\STATE Stop and collect $v_{\textbf{max}}$
			\ENDIF
\ENDFOR
\end{algorithmic}
\end{algorithm}

For a summary of the recent work on Pandora's Box see the survey~\cite{BeyhCai2023}.


\end{document}